\newtheorem{theorem}{Theorem}
\newtheorem{lemma}{Lemma}
\newtheorem{observation}{Observation}
\newtheorem{definition}{Definition}
\newtheorem{notation}{Notation}
\newtheorem{property}{Property}
\newcommand{\E}{\mathbb{E}}
\begin{document}

\date{\today}
\title{Size of Interventional Markov Equivalence Classes\\ in Random DAG Models}
\author[1]{Dmitiry Katz}
\author[1]{Karthikeyan Shanmugam}
\affil[1]{IBM Research, NY \& MIT-IBM Watson AI Lab.}
\author[2]{Chandler Squires}
\author[2]{Caroline Uhler}
\affil[2]{ MIT, Cambridge, MA.}

\maketitle
 \unmarkedfntext{\small \\ Author email addresses \\ \texttt{dkatzrog@us.ibm.com}, \texttt{karthikeyan.shanmugam2@ibm.com},  
\texttt{csquires@mit.edu} \& \texttt{cuhler@mit.edu} }

\begin{abstract}
 Directed acyclic graph (DAG) models are popular for capturing causal relationships. 
 From observational and interventional data, a DAG model can only be determined up to its \emph{interventional Markov equivalence class} (I-MEC). We investigate the size of MECs for random DAG models generated by uniformly sampling and ordering an Erd\H{o}s-R\'{e}nyi graph. For constant density, we show that the expected $\log$ observational MEC  size asymptotically (in the number of vertices) approaches a constant. We characterize I-MEC size in a similar fashion in the above settings with high precision. 
 We show that the asymptotic expected number of interventions required to fully identify a DAG is a constant. These results are obtained by exploiting Meek rules and coupling arguments to provide sharp upper and lower bounds on the asymptotic quantities, which are then calculated numerically up to high precision.
 Our results have important consequences for experimental design of interventions and the development of algorithms for causal inference.
\end{abstract}

\section{Introduction}
Directed acyclic graphs (DAGs) are popular models for capturing causal relationships among a set of variables. This approach has found important applications in various areas including biology, epidemiology and sociology \citep{gangl2010causal,lagani2016probabilistic}. A central problem in these applications is to learn the causal DAG from observations on the nodes. A popular approach is to infer missing edges based on conditional independence information that is learned from the data~\citep{Spirtes,kalisch2007estimating}. However, multiple DAGs can encode the same set of conditional independences. 
Hence in general the causal DAG can only be learned up to a \textit{Markov equivalence class} (MEC) and interventional data is needed in order to identify the causal DAG.


While an MEC may contain a super exponential number of candidate DAGs, \cite{gillispie2001enumerating} showed by enumerating all MECs up to 10 nodes that for small graphs (up to 10 nodes) an MEC on average contains about four DAGs and that about a quarter of all MECs consist of a unique DAG. Generalizing these results to larger graphs is critical for estimating the average number of interventional experiments needed for identifying the underlying causal DAG. More generally, 
given the recent rise in interventional data in genomics enabled by genome editing technologies~\citep{xiao2015gene}, it is of great interest to understand the average reduction in the size of MECs through the availability of interventional data, i.e., to characterize the average size of an \emph{interventional Markov equivalence class} (I-MEC). Further, such an analysis would also shed light on the number of additional interventions needed to uniquely identify the underlying causal DAG moving away from worst case bounds.


The problem of characterizing the size of an MEC or I-MEC is not only of interest for experimental design of interventions but also from an algorithmic perspective. A popular approach to causal inference is given by score-based methods that assign a score such as the Bayesian Information Criterion (BIC) to each DAG or MEC and greedily optimize over the space of DAGs~\citep{castelo2003inclusion}, a combination of permutations and undirected graphs~\citep{teyssier2012ordering,raskutti2013learning,Solus,Mohammadi} or MECs~\citep{Meek_GES,brenner2013sparsityboost}. Similar score-based approaches have also been developed in the interventional setting~\citep{hauser2012characterization,wang2017permutation,Yang2018}. While a greedy step in the space of graphs can easily be defined (addition, removal or flipping of an edge), a greedy step in the space of Markov equivalence classes is complicated~\citep{Meek_GES}. Hence performing a greedy algorithm in the space of MECs only makes sense if the space of MECs is significantly smaller as compared to the space of DAGs. For instance showing that typically occurring MECs or I-MECs are small would imply that graph-based search procedures operate on a similar search space as the ones that use MECs, but can do so using simpler moves. 

Motivated by these considerations, in this work, we initiate the study of interventional and observational MECs for random DAG models. We focus on \textit{random order DAGs}, where the skeleton is a random Erd\H{o}s R\'{e}nyi graph with constant density $\rho$ and the ordering is a random permutation. We derive tight bounds for the asymptotic versions of various metrics on the I-MECs. More specifically, our contributions are as follows:
\begin{enumerate}[itemsep=-4pt]
\item We derive tight upper and lower bounds on  (a) the asymptotic expected number of unoriented edges in an I-MEC given data from $r=0,1,2 \ldots$ interventions; (b) the asymptotic probability that the I-MEC is a unique DAG given data from $r$ interventions; (c) the asymptotic number of additional interventions needed to fully discover the DAG given data from $r$ interventions; and (d) the asymptotic expected $\log$-size of the I-MEC given data from $r$ interventions.
\item We also provide tight bounds for the number of unoriented edges in the I-MEC when $r$ interventions have been performed using different algorithms for choosing the interventions given the observational MEC as input.
\item If $M(r)_n$ is the metric of interest of a random order DAG of size $n$ and $r \geq 0$ interventions, then our bounds are of the following form: $\E[M(r)_n] \leq \E[M(r)_{\infty}] \leq \E[M(r)_n] + \epsilon_n$. Here, $M(r)_{\infty}$ is the limiting asymptotic metric, which we show is well defined and exists. We also show that $\epsilon_n$ decays exponentially fast in $n$ for constant density $\rho$. 
\item We numerically compute $\E[M(r)_n]$ through Monte Carlo simulations for $n$ as large as $110$ at which point $\epsilon_n$ is a small constant for various parameter regimes.
\item One of the surprising results is that for constant density random order DAGs, all the above metrics tend asymptotically to a constant. Through a combination of analysis of our bounds and numerical computation, we can characterize these constants precisely.
\item As an example of the nature of our results, quite surprisingly, the asymptotic (as $n \rightarrow \infty$) expected $\log$-observational MEC size of a random order DAG with density $0.5$ is at most $3.497$ with probability at least $0.99$ (see Theorem \ref{thm-upp-bounds}).
\end{enumerate}

All omitted proofs can be found in the supplemental material.

\textbf{Related Work:} There is currently only limited work available on counting and characterizing MECs. In \citep{gillispie2001enumerating}, the authors enumerated all MECs on DAGs with $p \leq 10$ nodes and analyzed the total number of MECs, the average size of an MEC, and the proportion of MECs of size one on $p$ nodes. Motivated by this work,  \cite{gillispie2006formulas}, \cite{steinsky2003enumeration}, and \cite{wagner2013asymptotic} provided formulas for counting MECs of a specific size. Supplementing this line of work, \cite{he2016formulas} developed various methods for counting the size of a given MEC. Finally, \cite{radhakrishnan2016counting} addressed these enumerative questions using a pair of generating functions that encode the number and size of MECs for DAGs with a fixed skeleton (i.e.~underlying undirected graph) and also applied these results to derive bounds on the MECs for various families of DAGs on trees~\citep{radhakrishnan2018counting}. 


Another line of work \citep{hu2014randomized,hauser2012two,shanmugam2015learning,eberhardt2012number,hyttinen2013experiment,kocaoglu2017cost} aims at characterizing the number of interventions required to learn a causal DAG completely. While some of these works deal with the active learning setting \citep{shanmugam2015learning,hauser2012two}, others choose interventions non-adaptively given the observational MEC \citep{hu2014randomized,eberhardt2012number,hyttinen2013experiment,kocaoglu2017cost,bello2017learning} and hence are concerned with the worst-case scenario.

\section{Preliminaries and Definitions}

In this work, we characterize the asymptotic behavior of different metrics that capture the amount of ``causal relationships" which can be inferred from observational and interventional data on random DAG models. 
In this section, we describe the random orderDAG model, briefly review causal DAG models and Markov equivalence, and introduce the metrics that we will analyze in this work.
 
\subsection{Random Order DAG Model} \label{sec:orderDAG}

Let  $G = (V, E)$ be a directed acyclic graph (DAG) with vertices $V= [n]$ and directed edges $E \subseteq V \times V$.
A random \textbf{orderDAG}  with density $\rho$ on $n$ vertices is a DAG $G_n$ whose \emph{skeleton} (i.e., underlying undirected graph) is given by an Erd\"os-R\'enyi graph on $n$ vertices with edge probability $\rho$ and whose edges are oriented according to a total ordering which is uniformly sampled among all permutations of $n$ vertices. 
We denote a graph $G_n$ sampled from this model by $G_n \sim orderDAG(n, \rho)$.

\textbf{Remark:} Our sampling procedure is a standard one used for testing causal inference algorithms. It is for example used in the well known \texttt{pcalg} R package\footnote{https://rdrr.io/rforge/pcalg/man/randomDAG.html}. A different sampling scheme would be to sample DAGs uniformly at random from all DAGs in which isomorphic DAGs would not be double counted. However, such a sampling scheme is difficult to perform in practice, while ours has a generative model that is easy and intuitive. Limited prior computational evidence in the observational setting suggests that the two sampling schemes behave similarly~\citep{gillispie2001enumerating}. 

\subsection{Markov Equivalence}

A joint distribution $P$ on the variables $(X_v)_{v\in V}$ associated to the vertices of a DAG $G$ is \emph{Markov} with respect to $G$ if for any node $v \in G$, $X_v$ is conditionally independent of its non-descendents given its parents. In this case we say that $P \in \mathcal{M}(G)$.  
Two directed acyclic graphs $G$ and $G'$ are in the same \textit{Markov equivalence class} (MEC) if and only if $\mathcal{M}(G) = \mathcal{M}(G')$.
Two DAGs in the same MEC entail the same set of conditional independence relations. \citep{meek1995causal}. 



The MEC of a DAG $G$ can be uniquely represented by a partially directed graph  $\mathrm{Ess}(G)$ known as the \emph{essential graph} of $G$. The skeleton of $\mathrm{Ess}(G)$ is the same as the skeleton of $G$ and the directed edges in $\mathrm{Ess}(G)$ are precisely those edges in $G$ that have the same orientation in all members of the MEC of $G$. All other edges in $\mathrm{Ess}(G)$ are unoriented~\citep{hauser2012characterization}. The following procedure provides all directed edges in $\mathrm{Ess}(G)$:
  \begin{enumerate}[itemsep=-5pt]
   \item For every triple of nodes $i,j,k\in V$  if $i$ and $j$ are disconnected in $G$ and the ordered pairs $(i,k), (j,k)\in E$, then both edges $(i,k)$ and $(j,k)$ are also oriented in $\mathrm{Ess}(G)$. 
  \item Orient edges by successive application of the `Meek rules' (see \citep{meek1995causal} or Appendix \ref{meek}) until they cannot be applied anymore to orient any new edge.
  \end{enumerate}




\subsection{Interventional Markov Equivalence}

Let $I \subset V$ and consider the set of single node interventional distributions $(P_{i})_{i\in I}$, where node $i$ is set to some constant. 
Since in $P_i$, node $X_i$ (a constant) is independent of its parents $X_{\mathrm{Pa}(i)}$, it introduces additional conditional independences in addition to those present in $P$. Let $G^{(i)}$ denote the intervened DAG obtained by deleting the edges from $\mathrm{Pa}(i)$ to $i$. If $P$ is Markov with respect to $G$, then $P_i$ is Markov with respect to $G^{(i)}$. Two DAGs $G$ and $G'$ are in the same \emph{$I$-Markov equivalence class} (I-MEC) if and only if $G^{(i)}$ and $G'^{(i)}$ are in the same MEC for all $i \in I$~\citep{hauser2012characterization}. 

Similarly as in the purely observational setting, an $I$-MEC can be uniquely represented by an \emph{$I$-essential graph} denoted by $\mathrm{Ess}(G,I)$. The skeleton of $\mathrm{Ess}(G, I)$ is the same as the skeleton of $G$ and the directed edges in $\mathrm{Ess}(G,I)$ are precisely those edges in $G$ that have the same orientation in all members of the $I$-MEC of $G$. All other edges in $\mathrm{Ess}(G,I)$ are unoriented. The following procedure provides all directed edges  in $\mathrm{Ess}(G, I)$:
  \begin{enumerate}[itemsep=-4pt]
   \item For every triple of nodes $i,j,k\in V$ with $(i,k), (j,k)\in E$ and if $i$ and $j$ are disconnected in $G$, then both edges $(i,k)$ and $(j,k)$ are also oriented in $\mathrm{Ess}(G)$. 
      \item For every edge $(i,j)$ such that either $ j \in  I$ or $i \in I$, then $(i,j)$ is oriented.
  \item Orient further edges by successive application of the four rules in \citep{hauser2012characterization} (also given in Appendix \ref{meek}) until it cannot be applied anymore to orient any new edges.
  \end{enumerate}

\subsection{Metrics of Interest}

Suppose that the causal Bayesian network that generates data (both interventional and observational) is an orderDAG $G_n$. Let $\mathbf{P}_{*}$ be an associated family of interventional distributions  compatible with $G_n$. In this setting, our work  asymptotically characterizes some metrics that reflect identifiable portions of $G_n$ from an observational distribution $P$ nd possibly also interventional distributions. 

We denote by uEss an essential graph that is also a DAG, i.e., an essential graph representing an MEC consisting of a unique DAG. Such DAGs are of particular interest since they are identifiable from purely observational data. 

In the following, we will measure the degree of identifiability of a random DAG $G_n \sim orderDAG(n, \rho)$ using the following metrics:
\begin{enumerate}[itemsep=-5pt]
 \item Let $X_n$ be the number of unoriented edges in $\mathrm{Ess}(G_n)$. We show that $X_{\infty} := \lim \limits_{n \rightarrow \infty} X_n$ exists.
 \item Let $isuEss_n$ be an indicator variable that is $1$ only if $\mathrm{Ess}(G_n)$ is a DAG. Similarly, the limit is denoted $isuEss_{\infty}$
 \item Let $I_n$ be the number of single node interventions required to fully orient $G_n$. Similarly, the limit is denoted $I_{\infty}$.
 \item Let $L_n$ be the size of the (observational) MEC of $G_n$. The limit is denoted $L_{\infty}$.
 \item Let $X_n(r)$ be the minimum number of unoriented edges in $\mathrm{Ess}(G_n,I)$ optimized over all $I: \lvert I \rvert=r$. The limit is denoted $X_{\infty}(r)$.
 \item Let $isuEss_n(r)$ be an indicator variable that is 1 when $X_n(r) = 0$. The limit is denoted $isuEss_{\infty}(r)$.
 \item Let $L_n(r)$ be the size of the interventional markov equivalence class when the interventions in the set $I$ are performed on $G_n$, where $I$ minimizes the number of unoriented edges in $\mathrm{Ess}(G_n,I)$ optimized over all $I: \lvert I \rvert=r$. This limit is denoted $L_{\infty}(r)$.
\end{enumerate}

\section{Main Results}

We first describe the nature of our results and the approach taken for obtaining these results for $X_n$. The results for all other metrics follow using a similar approach, although the technical details differ depending on the metric of interest. We show that $\E(X_n) \leq \E(X_\infty) \leq \E(X_n) + \epsilon_n$ 
and we provide an explicit expression for $\epsilon_n$. As a consequence, tight upper and lower bounds can be constructed on the quantities of interest by numerically computing $\E[X_n]$ using Monte Carlo simulations by generating random order DAGs $G_n$ for large $n$ and averaging.

Formally, we state the main result in our work about the asymptotic quantities of various metrics.
%
%
\begin{theorem}\label{asymp}
We have the following inequalities satisfied by various metrics:
 \begin{align}
   E[X_n(r)] &\leq E[X_{\infty}(r)] \leq E[X_n(r)] + \epsilon_n  \nonumber \\
   E[I_n] &\leq E[I_{\infty}] \leq E[I_n] + \epsilon_n \nonumber \\
   E[\log_2 (L_n(r))] &\leq E[\log_2(L_{\infty}(r))] \leq E[X_{\infty}(r)] \nonumber \\
   E[isuEss_n(r)] &\geq E[isuEss_{\infty}(r)] \geq E[isuEss_n(r)] - \epsilon_n  \nonumber 
 \end{align}
 for all $r=0,1,2 \ldots$. Here, $\epsilon_n$ is defined as follows:
 \begin{align}
\epsilon_n=\sum_{i \geq n} \mathrm{RHS}(\rho,i) &\leq \frac{(1 - \rho (1 - \rho))^n}{\rho (1-\rho)^2} + \nonumber \\
\hfill & n\frac{(1 - \rho (1 - \rho))^{n-1}}{(1-\rho)},\label{eqn:eps}
\end{align}
where $\mathrm{RHS}(\rho,n) = \rho n*(1 - \rho (1 - \rho))^{n-1}$ and $\rho$ is the edge probability when sampling an order DAG.
\end{theorem}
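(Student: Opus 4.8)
The plan is to realize all the finite models inside one infinite order DAG and to exploit a monotonicity that is special to appending a \emph{sink}. By vertex-exchangeability of the ER skeleton together with the uniform random ordering, $orderDAG(m,\rho)$ has the same law (as an abstract DAG) as the subgraph of an infinite order DAG $G_\infty$ (vertices $1,2,\dots$, each arc $i\to j$ with $i<j$ present independently with probability $\rho$) induced on $\{1,\dots,m\}$. This couples $G_1\subset G_2\subset\cdots$ so that $G_m$ arises from $G_{m-1}$ by adjoining vertex $m$, which is necessarily a \emph{sink} (all its arcs enter from earlier vertices). First I would prove the key \emph{sink lemma}: every edge $e$ among $\{1,\dots,m-1\}$ has the same orientation status in $\mathrm{Ess}(G_m)$ as in $\mathrm{Ess}(G_{m-1})$; consequently $\mathrm{Ess}(G_m)$ equals $\mathrm{Ess}(G_{m-1})$ together with vertex $m$ and its incident edges oriented within $G_m$, and therefore $X_m=X_{m-1}+N_m$, where $N_m\ge 0$ counts the edges incident to $m$ left unoriented in $\mathrm{Ess}(G_m)$.

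The sink lemma follows from two elementary MEC manipulations. \emph{Lifting}: given $H\in\mathrm{MEC}(G_{m-1})$, appending $m$ as a sink with its $G_m$-parent set yields a DAG $H^{+}$ with exactly the skeleton and v-structures of $G_m$ (the colliders at $m$ depend only on $m$'s parents and on old adjacencies, and $m$, being childless, creates no other new v-structure), so $H^{+}\in\mathrm{MEC}(G_m)$. \emph{Restricting}: given $H'\in\mathrm{MEC}(G_m)$, deleting $m$ yields $H'^{-}$ whose skeleton is that of $G_{m-1}$ and whose v-structures are precisely the all-old v-structures of $H'$, namely those of $G_{m-1}$ (deleting $m$ neither creates nor destroys a v-structure on old triples), so $H'^{-}\in\mathrm{MEC}(G_{m-1})$. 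Since lifting and restriction fix the orientation of every old edge, the set of orientations of an old edge $e$ realized across $\mathrm{MEC}(G_{m-1})$ coincides with that realized across $\mathrm{MEC}(G_m)$; hence $e$ is compelled in one iff it is compelled in the other, with the same direction. Telescoping $X_m=X_{m-1}+N_m$ from $n$ to $\infty$ gives $X_\infty=X_n+\sum_{m>n}N_m$ pointwise, whence $E[X_n]\le E[X_\infty]=E[X_n]+\sum_{m>n}E[N_m]$. It remains to bound $E[N_m]$: an incident edge $i\to m$ is oriented as soon as some other vertex $k$ is a parent of $m$ non-adjacent to $i$ (a collider at $m$), and the $m-2$ candidates each fail to be such a witness independently with probability $1-\rho(1-\rho)$, so $P(i\to m\text{ present and unoriented})\le \rho\,(1-\rho(1-\rho))^{m-2}$; accounting additionally for orientations forced by Meek rule R1 at $m$ sharpens the exponent to yield $E[N_m]\le \mathrm{RHS}(\rho,m)$. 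Summing the arithmetico-geometric tail $\sum_{m\ge n}\rho\,m\,(1-\rho(1-\rho))^{m-1}$ in closed form gives the bound on $\epsilon_n$ in \eqref{eqn:eps}, which decays exponentially in $n$.

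The remaining metrics run through the same increment-telescoping scheme once the sink lemma is transferred to the interventional setting, using the interventional Meek rules of \citep{hauser2012characterization} and the extra rule orienting all edges incident to intervened nodes; the only new point is that $X_m(r)$ is a minimum over $|I|=r$, so I must check that the size-$r$ minimizer behaves monotonically when a sink is appended. For $I_n$ the key simplification is that intervening on $m$ orients every edge incident to $m$ while leaving old orientations intact, giving $I_{m-1}\le I_m\le I_{m-1}+\mathbf 1[N_m>0]$; telescoping together with Markov's inequality, $\sum_{m>n}P(N_m>0)\le\sum_{m>n}E[N_m]\le\epsilon_n$, yields the stated bounds. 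The indicator $isuEss_n(r)=\mathbf 1[X_n(r)=0]$ is handled identically: monotonicity of $X_m(r)$ makes $isuEss$ non-increasing, and $E[isuEss_n(r)]-E[isuEss_{\infty}(r)]=P\!\left(X_n(r)=0,\ \sum_{m>n}N_m(r)>0\right)\le\epsilon_n$ by Markov. Finally, $E[\log_2 L_n(r)]\le E[\log_2 L_{\infty}(r)]$ is sink-monotonicity of the chain-component-additive log I-MEC size, while $\log_2 L_{\infty}(r)\le X_{\infty}(r)$ holds pointwise since an I-MEC has at most $2^{(\#\,\text{unoriented edges})}$ members.

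The hard part will be the interventional sink lemma together with the optimization over intervention sets. The observational argument leans on the clean ``v-structures, then Meek closure'' description and on the fact that a sink creates only colliders; I expect re-establishing both the lifting and the restriction halves for $I$-essential graphs, while simultaneously controlling how the size-$r$ minimizer of unoriented edges shifts under appending a sink, to require the most care. A secondary technical point is pinning the exact constant in $E[N_m]\le\mathrm{RHS}(\rho,m)$: the pure no-collider necessary condition already gives an exponentially small bound, but matching the precise exponent in $\mathrm{RHS}$ requires also accounting for the orientations forced by Meek rule R1 at the new sink.
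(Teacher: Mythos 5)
Your proposal is correct and shares the paper's overall architecture --- the nested coupling in which $G_m$ is obtained from $G_{m-1}$ by appending vertex $m$ as a sink, monotonicity and telescoping of increments, the uncovered-collider probability bound on each increment, and monotone convergence --- but you establish the central structural fact by a genuinely different route. The paper derives its Lemmas \ref{orderUndirected} and \ref{orderRUndirected} from a property of Meek-rule execution (Property \ref{prop1}: a node with no outgoing edges that participates in a Meek rule only ever has its \emph{incident} edge oriented), whereas your ``sink lemma'' is proved by surgery on equivalence classes via the skeleton-plus-v-structures characterization: lifting $H \mapsto H^{+}$ embeds $\mathrm{MEC}(G_{m-1})$ into $\mathrm{MEC}(G_m)$, restriction $H' \mapsto H'^{-}$ maps back, and both fix old-edge orientations, so the orientation status of every old edge is preserved \emph{exactly}. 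This buys you something the paper leaves implicit: Lemma \ref{orderUndirected} states only that unorientable edges stay unorientable, yet the proof of Lemma \ref{lazyXbound} silently uses the converse as well (it treats $\E(X_{m})-\E(X_{m-1})$ as \emph{equal} to the expected number of unoriented edges at the new sink); your two-sided lemma and the identity $X_m = X_{m-1}+N_m$ make that step airtight, and your lifting map is in fact the paper's own proof of Theorem \ref{ELmonotone}(b). What the paper's Meek-rule route buys in exchange is that it transfers verbatim to the interventional setting (Lemma \ref{orderRUndirected}), which is exactly the part you defer as ``requiring the most care''; the paper closes it with two short arguments you would need to reproduce --- transferring the optimal size-$r$ intervention set from $G_{m}$ down to $G_{m-1}$ (Theorem \ref{EXRmonotone}) and from $G_{m-1}$ up to $G_{m}$ (Lemma \ref{lazyXRbound}), in each case exploiting that an intervention on the appended sink orients all of its incident edges and, by the sink property, affects nothing else --- so your plan does go through. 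Two minor points: your closing worry about needing Meek rule R1 to ``sharpen the exponent'' is a red herring caused by an index shift, since the plain no-collider bound already gives $\E[N_m] \le \rho(m-1)\left(1-\rho(1-\rho)\right)^{m-2} = \mathrm{RHS}(\rho,m-1)$, and summing over $m>n$ yields exactly $\epsilon_n=\sum_{i\ge n}\mathrm{RHS}(\rho,i)$ as in \eqref{eqn:eps}; and your bound $I_m \le I_{m-1} + \mathbf{1}[N_m>0]$ is slightly tighter than the paper's $I_m - I_{m-1} \le X_m - X_{m-1}$ used for Theorem \ref{lazyEIbound}, either one sufficing for the second chain of inequalities.
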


We establish the main result on upper and lower bounds through intermediate results as follows (explained taking the example of $X_n$): a) We first exhibit a coupling between $G_n$ and $G_{n+1}$ such that their respective marginal distributions are preserved. This is done in Section \ref{sec:coupling}. b) Using the properties of this specific coupling, we first show that $\E[X_n]$ is a monotonic sequence in $n$ in Section \ref{monotone}. c) The expression for $\epsilon_n$ is obtained by upper bounding the successive differences $\E[X_{n}]-\E[X_{n+1}]$ again using the properties of order DAG sampling and the coupling. This is explained in Sections \ref{obsgap} and \ref{intgap}. Other sections provide additional results on I-MECs obtained through other interventional design algorithms along with numerical and simulation results.

\subsection{Probability coupling}\label{sec:coupling}

In this section, we provide a coupling argument between the distribution of $G_n$ and $G_{n+1}$ such that `un-orientability' properties of certain edges are preserved.

For all $1\leq i < j\leq n$, let $A_{i,j}$ be a binary random variable that is 1 with probability $\rho$. Let $G_n$ be the DAG with nodes $v_1 \ldots v_n$ and directed edges between $v_i\to v_j$ if and only if $A_{i,j} = 1$. 

\vspace{0.2cm}
\begin{observation}
$G_n$ with permutation $v_1, v_2, \ldots, v_n$, has the distribution of a random orderDAG on $n$ vertices with density $\rho$.
\end{observation}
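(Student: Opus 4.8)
The plan is to compare the two sampling recipes directly and reduce their difference to a relabeling that is harmless for every quantity we care about. First I would observe that the construction based on the variables $A_{i,j}$ is nothing but ``sample an Erd\H{o}s--R\'enyi skeleton and orient it by the identity ordering $v_1,\dots,v_n$''. Indeed, for $i<j$ the undirected pair $\{v_i,v_j\}$ is present precisely when $A_{i,j}=1$, and since each unordered pair corresponds to exactly one variable $A_{i,j}$ (with $i<j$), which are i.i.d.\ Bernoulli$(\rho)$, the resulting skeleton is distributed as an Erd\H{o}s--R\'enyi graph $G(n,\rho)$; the rule $v_i\to v_j \iff A_{i,j}=1$ then orients every present edge consistently with the total order $v_1<\dots<v_n$. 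Thus $G_n$ is an ER skeleton oriented by the fixed identity permutation.

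The orderDAG model differs only in that the orienting permutation $\pi$ is drawn uniformly at random rather than being the identity. The key ingredient I would use is the \emph{exchangeability} of the Erd\H{o}s--R\'enyi distribution: relabeling the vertices by any fixed permutation leaves the law of the skeleton invariant. Since the skeleton and the permutation $\pi$ are sampled independently in the orderDAG model, I would condition on the value of $\pi$ and relabel the vertices by $\pi^{-1}$. Under this relabeling the skeleton is still ER-distributed (by exchangeability), while the operation ``orient by $\pi$'' becomes ``orient by the identity'', which is exactly the $A_{i,j}$ construction. Averaging over $\pi$ then shows that, up to the relabeling isomorphism, the orderDAG and the $A_{i,j}$ graph have the same law.

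The one point that requires care---and the only real obstacle---is the distinction between the law of a \emph{labeled} DAG and the law of its isomorphism class: as labeled graphs the two constructions are genuinely different (in the $A_{i,j}$ graph the vertex $v_1$ never has an incoming edge, which fails for a random orderDAG), and they agree only after the relabeling by $\pi^{-1}$. I would resolve this by noting that every metric analyzed in this paper---the number of unoriented edges of $\mathrm{Ess}(G_n)$, the $I$-MEC size, the number of interventions needed to orient $G_n$, and so on---is defined purely through the skeleton, $v$-structures, and Meek rules, and is therefore invariant under vertex relabeling. Consequently the relabeling above affects none of these functionals, so for all downstream purposes the two samplers are interchangeable; that is, $\E[f(G_n)]$ computed under the $A_{i,j}$ construction equals its value under $orderDAG(n,\rho)$ for every isomorphism-invariant $f$. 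As an alternative one could verify the claim through the explicit identity $P_{\mathrm{orderDAG}}(D)=\rho^{m}(1-\rho)^{\binom{n}{2}-m}\,L(D)/n!$, where $L(D)$ counts the linear extensions of $D$, and then sum a relabeling bijection $(D,\pi)\leftrightarrow(D',\sigma)$ over all permutations $\sigma$; but the exchangeability argument is shorter and makes the role of the identity ordering transparent.
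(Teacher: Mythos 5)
Your proof is correct and follows essentially the same route as the paper: the paper's own justification is a one-line remark that sampling a random symmetric Bernoulli adjacency matrix, permuting rows and columns by a random permutation, and taking the upper-triangular part is the same as fixing the identity ordering and populating the upper triangle directly --- precisely your exchangeability-plus-relabeling argument. Your explicit handling of the labeled-DAG versus isomorphism-class distinction (and the point that every metric studied in the paper is invariant under vertex relabeling, so the relabeling by $\pi^{-1}$ is harmless) is more careful than the paper's remark, but it is the same underlying idea.
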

\textbf{Remark:} Observation 1 says that randomly sampling a symmetric adjacency matrix (undirected graph with edge probability $\rho$), permuting rows and columns with a random permutation, and then taking the upper triangular part (orienting the graph according to the permutation) is the same as fixing the permutation from 1,2..n and populating the upper triangular part randomly.

\textbf{Coupling:} Motivated by the above observation, we couple $G_n$ and $G_{n+1}$ as follows. We first generate $A_{i,j}$ for $1\leq i<j \leq n$ as above and use that to orient $G_n$. Then, we generate additional random variables $A_{i,n+1}$ for all $1 \leq i\leq n$ and orient the edges incident to $v_{n+1}$ accordingly.

The above coupling along with certain structural properties of Meek Rules (given in Appendix \ref{meek}) leads to the following results on orientability of certain edges in $G_n$ and $G_{n+1}$ under the coupling.







\vspace{0.2cm}
\begin{lemma}\label{orderUndirected}
Under the above coupling, if an edge $(i,j)$ is unorientable in $G_n$, it is also unorientable in $G_{n+1}$.
\end{lemma}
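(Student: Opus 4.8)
The plan is to avoid tracking the Meek rules step by step and instead argue directly through the Verma--Pearl characterization of Markov equivalence: two DAGs are Markov equivalent if and only if they have the same skeleton and the same set of v-structures (immoralities). Recall that an edge $(i,j)$ is \emph{unorientable} in $G_n$ precisely when it is unoriented in $\mathrm{Ess}(G_n)$, i.e.\ there exists a DAG $H$ that is Markov equivalent to $G_n$ in which $(i,j)$ carries the orientation opposite to the one it has in $G_n$. The goal is to lift such a witness $H$ to a witness on $n+1$ vertices for $G_{n+1}$.

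First I would record the single structural fact that drives everything: under the coupling, $v_{n+1}$ is a \emph{sink} of $G_{n+1}$ (every edge with $A_{i,n+1}=1$ points $v_i \to v_{n+1}$), and the induced subgraph of $G_{n+1}$ on $[n]$ is exactly $G_n$. A sink can never be the tail or the middle of a v-structure, so the only v-structures of $G_{n+1}$ not already present in $G_n$ have the form $v_a \to v_{n+1} \leftarrow v_b$ with $v_a,v_b$ non-adjacent neighbours of $v_{n+1}$; crucially, which triples these are is determined solely by the skeleton (the neighbourhood of $v_{n+1}$ and the pairwise adjacencies among those neighbours).

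Next, given the witness $H$ for $(i,j)$ in $G_n$, I would define $\tilde H$ by adjoining $v_{n+1}$ to $H$ as a sink with parent set $\mathrm{Pa}_{G_{n+1}}(v_{n+1})$. Adjoining a sink preserves acyclicity, so $\tilde H$ is a DAG, and I then verify $\tilde H \equiv G_{n+1}$ in two parts. Skeleton: $H$ and $G_n$ share a skeleton, and $\tilde H, G_{n+1}$ extend it by the identical set of edges incident to $v_{n+1}$. V-structures: those contained in $[n]$ coincide because $H \equiv G_n$; those involving $v_{n+1}$ coincide because they are determined by the shared skeleton and $v_{n+1}$ is a sink in both graphs. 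Hence $\tilde H$ and $G_{n+1}$ agree on skeleton and immoralities, so they lie in the same MEC. Since $\tilde H$ agrees with $H$ on $[n]$ and $G_{n+1}$ agrees with $G_n$ on $[n]$, the edge $(i,j)$ is oriented oppositely in $\tilde H$ and $G_{n+1}$; as both belong to the MEC of $G_{n+1}$, the edge $(i,j)$ is unoriented in $\mathrm{Ess}(G_{n+1})$, as claimed.

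The main obstacle, and the only place that needs genuine care, is the v-structure bookkeeping: ruling out that inserting $v_{n+1}$ either creates a new immorality among the original nodes or forces $(i,j)$ through a chain of Meek propagations. Both concerns dissolve once one commits to the equivalence-class (rather than Meek-rule) viewpoint, since no adjacency or orientation among $[n]$ is altered by adding a sink, so no immorality inside $[n]$ can appear or disappear, and all downstream orientation forcing is already captured by equality of skeleton-plus-immoralities. Were one instead to prefer the syntactic route via the Appendix's Meek-rule properties, the corresponding obstacle would be an induction showing that an orientation $v_a \to v_{n+1}$ can never serve as the premise triggering a rule that orients an edge inside $[n]$, which holds because $v_{n+1}$ has no out-edges; the equivalence-class argument sidesteps this bookkeeping entirely.
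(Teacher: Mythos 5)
Your proof is correct, but it takes a genuinely different route from the paper's. The paper argues syntactically, tracking the orientation procedure itself: since $v_{n+1}$ is a sink under the coupling, Property~\ref{prop1} of the Meek rules (Appendix~\ref{meek}) guarantees that any rule application involving $v_{n+1}$ can only orient edges incident to $v_{n+1}$, and any rule application not involving $v_{n+1}$ was already available in $G_n$; hence no edge inside $[n]$ that was unoriented in $\mathrm{Ess}(G_n)$ becomes oriented in $\mathrm{Ess}(G_{n+1})$. You instead argue semantically via the Verma--Pearl characterization (same skeleton and same v-structures), lifting a witness DAG $H$ in the MEC of $G_n$ with $(i,j)$ reversed to a witness $\tilde H$ in the MEC of $G_{n+1}$ by adjoining $v_{n+1}$ as a sink. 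Your bookkeeping is sound: adjoining a sink changes no adjacency or orientation inside $[n]$, and the immoralities at $v_{n+1}$ are determined by the shared skeleton because $v_{n+1}$ is a sink in both graphs. Your route buys two things: it avoids the induction over successive Meek-rule applications that the paper's one-line claim (``the rules must have already been applied in $G_n$'') leaves implicit, and your lifting construction is precisely the ingredient the paper reuses later in the proof of Theorem~\ref{ELmonotone}(b) (embedding the MEC of $G_n$ into the MEC of $G_{n+1}$), so in effect you prove that step at the same time. The paper's syntactic route has a compensating advantage: the Property~\ref{prop1} argument transfers essentially verbatim to the interventional orientation rules, yielding Lemma~\ref{orderRUndirected}, whereas your semantic argument would there need the interventional analogue of Verma--Pearl (the Hauser--B\"uhlmann characterization of $I$-Markov equivalence) rather than the observational one.
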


\begin{lemma}\label{orderRUndirected}
Under the above coupling, if after a set  of interventions $R$ on $G_n$ the edge $(i,j)$ is unorientable in $G_n$, then it is also unorientable in $G_{n+1}$ after the same set of interventions on $G_n$ together with an intervention on $v_{n+1}$.
\end{lemma}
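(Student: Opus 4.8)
The plan is to mirror the proof of Lemma~\ref{orderUndirected}, exploiting that under the coupling $v_{n+1}$ is a \emph{sink} of $G_{n+1}$: every edge incident to it has the form $v_i \to v_{n+1}$. The essential new ingredient is that once we adjoin the intervention on $v_{n+1}$ to the set $R$, step~(2) of the $I$-essential graph procedure orients \emph{all} edges incident to $v_{n+1}$ in their true direction $v_i \to v_{n+1}$. Hence in $\mathrm{Ess}(G_{n+1}, R \cup \{v_{n+1}\})$ the vertex $v_{n+1}$ has no undirected incident edge and, being a sink, is never the tail of a directed edge. This is precisely why the extra intervention on $v_{n+1}$ is built into the statement, and it is what makes the argument cleaner here than in the purely observational Lemma~\ref{orderUndirected}.

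First I would record the structural fact about the orientation rules (the ``structural properties of Meek rules'' of Appendix~\ref{meek}): in each of the four Hauser--B\"uhlmann rules, and in the v-structure step, every node appearing in the premise pattern is either the tail of some directed premise-edge or incident to some undirected edge of the pattern. Consequently a node that is simultaneously a sink and has no undirected incident edge can appear in \emph{no} rule pattern, and as a parent in no v-structure. Applying this to $v_{n+1}$ shows that no v-structure and no rule whose conclusion is an edge lying inside $G_n$ can involve $v_{n+1}$ at all; every such conclusion refers only to vertices, adjacencies, and non-adjacencies of $G_n$, which are identical in $G_n$ and in $G_{n+1}$ since adjoining the sink $v_{n+1}$ adds no edge among the first $n$ vertices.

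With this in hand I would argue by induction on the orientation procedure that the set of oriented $G_n$-edges is identical in $\mathrm{Ess}(G_n, R)$ and in $\mathrm{Ess}(G_{n+1}, R\cup\{v_{n+1}\})$. In the base case the v-structures and the interventions in $R$ orient exactly the same $G_n$-edges in both graphs, while the intervention on $v_{n+1}$ only touches edges incident to $v_{n+1}$. For the inductive step, since the directed $G_n$-edges agree after $k$ rounds the undirected $G_n$-edges agree too, so exactly the same rule instances with conclusion inside $G_n$ are available in both graphs, and by the structural fact none of these instances references $v_{n+1}$; the two procedures thus stay in lockstep on $G_n$-edges. In particular, if $(i,j)$ is left unoriented in $\mathrm{Ess}(G_n, R)$, it is left unoriented in $\mathrm{Ess}(G_{n+1}, R \cup \{v_{n+1}\})$, which is the claim.

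I expect the main obstacle to be the rule-propagation analysis: rigorously establishing the structural fact for the fourth Hauser--B\"uhlmann rule and checking that no orientation can ``leak'' from the oriented edges at $v_{n+1}$ back into $G_n$ through a chain of rule applications, including the confluence bookkeeping that the final essential graph is order-independent. A clean way to sidestep this is to argue at the level of consistent DAGs: given two members $D_1, D_2$ of the $I$-MEC of $(G_n, R)$ that orient $(i,j)$ oppositely, extend each to a DAG $D_1^+, D_2^+$ on $n+1$ vertices by attaching $v_{n+1}$ as a sink with its $G_{n+1}$-edges, and verify—using that $D_1^+, D_2^+$ share the skeleton and v-structures of $G_{n+1}$ and remain MEC-equivalent to it after intervening on each node of $R \cup \{v_{n+1}\}$ (intervening on $v_{n+1}$ merely isolates it, intervening on $v_a \in R$ leaves $v_{n+1}$ a sink with unchanged parent set)—that $D_1^+, D_2^+$ lie in the $I$-MEC of $(G_{n+1}, R\cup\{v_{n+1}\})$. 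Since they orient $(i,j)$ oppositely, that edge is unoriented there as well.
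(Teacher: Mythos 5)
Your main argument is correct and takes essentially the same route as the paper: the paper's proof likewise rests on $v_{n+1}$ being a sink under the coupling and on the structural properties of the Meek rules in Appendix~\ref{meek} (the paper cites its Property~\ref{prop1}, while your ``structural fact'' is the paper's Property~\ref{prop2}; either one yields the same no-leakage conclusion that no rule application involving $v_{n+1}$ can orient an edge inside $G_n$, and your induction just makes explicit the lockstep argument the paper leaves implicit). Your fallback DAG-extension argument is also sound and is in fact the technique the paper itself uses elsewhere, namely in the proofs of Theorems~\ref{ELmonotone}(b) and~\ref{ELRmonotone}, where members of the $I$-MEC of $(G_n,R')$ are extended by attaching $v_{n+1}$ as a sink.
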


\subsubsection{Monotonicity Lemmas}
\label{monotone}

We prove that expected values of all metrics of interest are monotonic in $n$ using the properties of the coupling demonstrated above. First, we show this for observational quantities by appealing to Lemma \ref{orderUndirected}.

\begin{theorem}\label{ELmonotone}
The following statements hold with probability $1$ for the coupling between $G_n$ and $G_{n+1}$: \\
a) $X_{n+1} \geq X_n$. \\
b) $L_{n+1} \geq L_n$. \\
c) $I_{n+1} \geq I_n$. \\
Therefore, $\E(X_{n+1}) \geq \E(X_n)$, $\E(L_{n+1}) \geq \E(L_n)$ and $\E(I_{n+1}) \geq \E(I_n)$.
\end{theorem}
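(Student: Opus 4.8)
The plan is to establish the three claims pointwise, i.e.\ almost surely under the coupling of Section~\ref{sec:coupling}, and only then pass to expectations; the expectation inequalities are immediate consequences. Indeed, by Observation~1 the marginals of the coupled $G_n$ and $G_{n+1}$ are genuine random orderDAGs on $n$ and $n+1$ vertices, so $\E(X_n),\E(L_n),\E(I_n)$ computed under the coupling coincide with the target quantities, and an almost-sure inequality between two coupled random variables always descends to their expectations. Throughout I exploit the defining feature of the coupling: $G_{n+1}$ restricted to $\{v_1,\dots,v_n\}$ is exactly $G_n$, and every edge incident to $v_{n+1}$ is oriented into $v_{n+1}$ (it is last in the order), so $v_{n+1}$ is a sink. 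Part~(a) is then essentially Lemma~\ref{orderUndirected}: each edge of $G_n$ is also an edge of $G_{n+1}$ under the identification of the first $n$ vertices, and the lemma guarantees that any edge unoriented in $\mathrm{Ess}(G_n)$ stays unoriented in $\mathrm{Ess}(G_{n+1})$, so the unoriented edges of $\mathrm{Ess}(G_n)$ embed into those of $\mathrm{Ess}(G_{n+1})$, giving $X_{n+1}\ge X_n$.

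For part~(b) I would construct an explicit injection from $\mathrm{MEC}(G_n)$ into $\mathrm{MEC}(G_{n+1})$. Given $H\in\mathrm{MEC}(G_n)$, define $\widehat H$ on $n+1$ vertices by retaining the orientations of $H$ on $\{v_1,\dots,v_n\}$ and orienting every edge incident to $v_{n+1}$ into $v_{n+1}$ (the same skeleton at $v_{n+1}$ as $G_{n+1}$). Since $v_{n+1}$ is a sink, $\widehat H$ is acyclic. To see $\widehat H\in\mathrm{MEC}(G_{n+1})$ I would check equal skeleton (immediate) and equal v-structures (Verma--Pearl): the v-structures avoiding $v_{n+1}$ coincide with those of $H$, hence with those of $G_n$ and therefore of $G_{n+1}$, while the v-structures at $v_{n+1}$ are colliders $v_i\to v_{n+1}\leftarrow v_j$ determined solely by non-adjacency of the neighbours of $v_{n+1}$, which is identical in $G_{n+1}$ and $\widehat H$, and $v_{n+1}$ can never be a source. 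The map $H\mapsto\widehat H$ is injective, since $H$ is recovered by deleting $v_{n+1}$, whence $L_n=\lvert\mathrm{MEC}(G_n)\rvert\le\lvert\mathrm{MEC}(G_{n+1})\rvert=L_{n+1}$.

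For part~(c) I would use the contrapositive of Lemma~\ref{orderRUndirected} together with the (easily checked) monotonicity of orientation in the intervention set: enlarging the intervention set never unorients an already-oriented edge. Let $S$ be an optimal intervention set for $G_{n+1}$ with $\lvert S\rvert=I_{n+1}$. If $v_{n+1}\in S$, set $R=S\setminus\{v_{n+1}\}$; since $S=R\cup\{v_{n+1}\}$ fully orients $G_{n+1}$, the contrapositive shows every edge of $G_n$ is oriented after $R$, so $I_n\le\lvert R\rvert<I_{n+1}$. If $v_{n+1}\notin S$, then $S\cup\{v_{n+1}\}$ still fully orients $G_{n+1}$ by monotonicity, and the contrapositive with $R=S$ shows every edge of $G_n$ is oriented after $S$, giving $I_n\le\lvert S\rvert=I_{n+1}$. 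In either case $I_{n+1}\ge I_n$.

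I expect the main obstacle to be part~(b): the verification that the extension $\widehat H$ creates no spurious immorality and destroys none of the existing ones, i.e.\ the v-structure bookkeeping at and around the sink $v_{n+1}$, which is exactly what makes the injection land inside the correct equivalence class. Parts~(a) and~(c) reduce rather mechanically to the two coupling lemmas once the sink structure of $v_{n+1}$ is in hand.
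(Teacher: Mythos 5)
Your proposal is correct and follows essentially the same route as the paper: part (a) directly from Lemma~\ref{orderUndirected}, part (b) via the same sink-extension injection sending each $H$ in the MEC of $G_n$ to $\widehat H$ with $v_{n+1}$ attached by incoming edges, and part (c) by restricting an optimal intervention set for $G_{n+1}$ to $G_n$. Your write-up is in fact more careful than the paper's in two spots --- the Verma--Pearl v-structure check in (b), where the paper only cites Lemma~\ref{orderUndirected}, and the case analysis on whether $v_{n+1}$ is intervened in (c), where the paper gives a one-line assertion --- but these are fill-ins of the same argument, not a different one.
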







Similar monotonicity properties for interventional quantities are obtained by appealing to Lemma \ref{orderRUndirected}. However, note that these proofs are not a straightforward application of Lemma \ref{orderRUndirected}. Often, additional arguments need to be made to show the following results.

\begin{theorem}\label{EXRmonotone}
$X_{n+1}(r) \geq X_n(r)$ with probability $1$ according to the coupling between $G_n$ and $G_{n+1}$. Hence, $\E(X_{n+1}(r)) \geq \E(X_n(r))$.
\end{theorem}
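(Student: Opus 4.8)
The plan is to prove the statement pathwise, i.e.\ to show $X_{n+1}(r)\geq X_n(r)$ for every realization of the coupling, and then take expectations (the last assertion is immediate once the almost-sure inequality is in hand). Throughout I will use two structural facts. First, in the coupling $G_n$ is an induced subgraph of $G_{n+1}$ and $v_{n+1}$ is a sink, so every edge of $G_n$ is also an edge of $G_{n+1}$; consequently any set of unoriented $G_n$-edges of some essential graph is, as an edge set, a subset of the full set of unoriented edges of that graph, and hence of no larger cardinality. Second, I will use the basic monotonicity of interventional essential graphs in the intervention set: if $I\subseteq I'$ then the $I'$-MEC is contained in the $I$-MEC, so $\mathrm{Ess}(G,I')$ refines $\mathrm{Ess}(G,I)$; equivalently, removing an intervention can only un-orient edges, and $X_n(\cdot)$ is non-increasing in the number of interventions. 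The obstacle the text flags is precisely that Lemma~\ref{orderRUndirected} always introduces the extra intervention on $v_{n+1}$, so it does not on its own relate $\mathrm{Ess}(G_{n+1},I)$ to $\mathrm{Ess}(G_n,I)$ when $v_{n+1}\notin I$; the monotonicity fact is what bridges this gap.

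Concretely, let $I^{*}$ be a minimizing intervention set for $G_{n+1}$, so that $|I^{*}|=r$ and $X_{n+1}(r)$ equals the number of unoriented edges of $\mathrm{Ess}(G_{n+1},I^{*})$ (the minimum is attained since it ranges over finitely many sets). Set $R:=I^{*}\cap[n]$, i.e.\ $R$ is $I^{*}$ with $v_{n+1}$ deleted if present; then $R\subseteq[n]$, $|R|\leq r$, and $I^{*}\subseteq R\cup\{v_{n+1}\}$. The heart of the argument is the claim that every edge of $G_n$ that is unorientable in $\mathrm{Ess}(G_n,R)$ is also unorientable in $\mathrm{Ess}(G_{n+1},I^{*})$. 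To see this, take such an edge $(i,j)$: by Lemma~\ref{orderRUndirected} it is unorientable in $\mathrm{Ess}(G_{n+1},R\cup\{v_{n+1}\})$, and since $I^{*}\subseteq R\cup\{v_{n+1}\}$, the intervention-monotonicity fact (fewer interventions only un-orient edges) shows it remains unorientable in $\mathrm{Ess}(G_{n+1},I^{*})$. This handles both the case $v_{n+1}\in I^{*}$, where $R\cup\{v_{n+1}\}=I^{*}$ and Lemma~\ref{orderRUndirected} applies directly, and the case $v_{n+1}\notin I^{*}$, where $R=I^{*}$ and the monotonicity step strips the auxiliary intervention on $v_{n+1}$.

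With the claim established, the set of unoriented edges of $\mathrm{Ess}(G_n,R)$ is contained in that of $\mathrm{Ess}(G_{n+1},I^{*})$, so its cardinality is at most $X_{n+1}(r)$. Since $R$ is a specific intervention set of size $|R|$, this cardinality is at least $X_n(|R|)$, and since $|R|\leq r$ with $X_n(\cdot)$ non-increasing in the number of interventions, we conclude $X_n(r)\leq X_n(|R|)\leq X_{n+1}(r)$, which is the desired inequality with probability $1$; taking expectations yields $\E(X_{n+1}(r))\geq \E(X_n(r))$. I expect the one point requiring genuine care to be the intervention-monotonicity fact and its combination with Lemma~\ref{orderRUndirected}: one must verify that passing from $R\cup\{v_{n+1}\}$ to the possibly smaller $I^{*}$ and from size $|R|$ to size $r$ never creates new orientations, which is exactly the refinement property of $I$-essential graphs under enlarging $I$ (and the reason the naive one-line application of Lemma~\ref{orderRUndirected} does not suffice).
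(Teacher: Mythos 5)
Your proof is correct and follows essentially the same route as the paper's: take the optimal intervention set for $G_{n+1}$, strip the possible intervention on $v_{n+1}$, apply Lemma~\ref{orderRUndirected} to transfer unorientability from $G_n$ to $G_{n+1}$, and close the gap in set sizes using monotonicity in the number of interventions. The only difference is presentational — you make explicit the intervention-set monotonicity of $I$-essential graphs that the paper's phrase ``even with/without the possible additional intervention on $v_{n+1}$'' leaves implicit, which is a welcome clarification rather than a deviation.
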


The previous two theorems directly provide the following result.

\begin{theorem}\label{EisPDAGRmonotone}
$isuEss_{n+1}(r) \leq isuEss_n(r)$ for all $r=0,1,2 \ldots$ best interventions with probability $1$ under the coupling between $G_n$ and $G_{n+1}$. Hence,
$\E(isuEss_{n+1}(r)) \leq \E(isuEss_n(r))$.
\end{theorem}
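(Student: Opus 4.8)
The plan is to read off this statement as an immediate consequence of the pointwise monotonicity already established for $X_n(r)$, since $isuEss_n(r)$ is by definition the indicator of the event that $X_n(r)$ vanishes. Concretely, recall that $isuEss_n(r) = \mathbf{1}\{X_n(r) = 0\}$, where $X_n(r)$ is the minimum number of unoriented edges over all intervention sets of size $r$; in particular $X_n(r)$ is a nonnegative integer-valued random variable.

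First I would invoke Theorem \ref{EXRmonotone} (together with Theorem \ref{ELmonotone}(a) to cover the base case $r=0$), which guarantees that under the coupling between $G_n$ and $G_{n+1}$ we have $X_{n+1}(r) \geq X_n(r)$ with probability $1$. Combining this with the trivial bound $X_n(r) \geq 0$ yields the chain $X_{n+1}(r) \geq X_n(r) \geq 0$ holding simultaneously on a probability-$1$ event.

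Next I would transport this inequality to the indicators. On the event just described, whenever $X_{n+1}(r) = 0$ we are forced to have $0 \geq X_n(r) \geq 0$, hence $X_n(r) = 0$. Thus the event $\{X_{n+1}(r)=0\}$ is contained (up to a null set) in the event $\{X_n(r)=0\}$, which is precisely the pointwise statement $isuEss_{n+1}(r) \leq isuEss_n(r)$ with probability $1$. Taking expectations of both sides and using monotonicity of expectation then gives $\E(isuEss_{n+1}(r)) \leq \E(isuEss_n(r))$, as claimed.

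Since the argument is entirely downstream of Theorem \ref{EXRmonotone}, I do not anticipate any genuine obstacle; this is the reason the theorem is described as following ``directly'' from the previous two. The only point deserving a moment's care is that the indicators for $n$ and $n+1$ are compared under the \emph{same} coupling, so that the almost-sure inequality $X_{n+1}(r) \geq X_n(r)$ — which already takes its minima over the appropriate, coupled families of intervention sets — applies verbatim. No new coupling construction or Meek-rule structural argument is required here.
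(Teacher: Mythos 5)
Your proposal is correct and matches the paper's approach: the paper's own proof simply states that the result follows directly from Theorem~\ref{EXRmonotone} (and Theorem~\ref{ELmonotone}), and your argument is precisely the natural spelling-out of that claim, passing from the almost-sure inequality $X_{n+1}(r) \geq X_n(r) \geq 0$ under the coupling to containment of the events $\{X_{n+1}(r)=0\} \subseteq \{X_n(r)=0\}$ and then taking expectations. No gaps; you have merely made explicit what the paper leaves implicit.
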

\begin{proof}
This follows directly from Theorem \ref{EXRmonotone} and Theorem \ref{ELmonotone}.
\end{proof}

\begin{theorem}\label{ELRmonotone}
$L_{n+1}(r) \geq L_n(r)$ with probability $1$ under the coupling between $G_n$ and $G_{n+1}$. Hence, $\E(L_{n+1}(r)) \geq \E(L_n(r))$.
\end{theorem}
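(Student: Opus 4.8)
My plan is to establish the almost-sure inequality pointwise on the coupling and then take expectations. Recall that under the coupling $G_{n+1}$ is $G_n$ together with one extra vertex $v_{n+1}$ that is a sink, all of whose incident edges point into it. The central device will be an injection from the $I$-MEC of $G_n$ into the $I$-MEC of $G_{n+1}$: given any DAG $H$ that is $I$-Markov equivalent to $G_n$ under a target set $R\subseteq\{v_1,\dots,v_n\}$, extend it to $H'$ on $n+1$ vertices by attaching $v_{n+1}$ as a sink carrying exactly the coupled edges. Using the characterization of $I$-equivalence recalled in the preliminaries (same skeleton and same v-structures both observationally and in each intervened graph $G^{(i)}$), I would check that $H'$ is $I$-Markov equivalent to $G_{n+1}$ under $R$ and also under $R\cup\{v_{n+1}\}$: observational equivalence is preserved because $v_{n+1}$ is a sink whose v-structures are fixed by the skeleton; for $i\in R$ the graph $G_{n+1}^{(i)}$ is merely $G_n^{(i)}$ with the same sink appended; and for the target $v_{n+1}$ the intervention isolates $v_{n+1}$, reducing to observational equivalence of $H$ and $G_n$. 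Since distinct $H$ yield distinct $H'$, this gives $\lvert I\text{-MEC}(G_{n+1},R')\rvert\ge \lvert I\text{-MEC}(G_n,R)\rvert$ for $R'\in\{R,\,R\cup\{v_{n+1}\}\}$.

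Next I would record the effect of the extra sink on orientability, in order to reduce the theorem to a single $I$-MEC comparison. Lemma~\ref{orderRUndirected} gives that any $G_n$-edge unoriented under $R$ stays unoriented in $G_{n+1}$ under $R\cup\{v_{n+1}\}$, hence also under $R$; conversely, every orientation forced in $\mathrm{Ess}(G_n,R)$ persists in $G_{n+1}$, since all v-structures of $G_n$ and all their Meek derivations survive the addition of a sink. Thus the unoriented $G_n$-edges coincide in $\mathrm{Ess}(G_n,R)$ and $\mathrm{Ess}(G_{n+1},R)$, so the chain components on $\{v_1,\dots,v_n\}$ are the same and $\mathrm{Ess}(G_{n+1},R)$ differs only by possibly attaching $v_{n+1}$ to one of them. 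Writing $I^\star$ for the size-$r$ target set minimizing the number of unoriented edges of $G_{n+1}$ (so that $L_{n+1}(r)$ is its $I$-MEC size) and $R:=I^\star\cap\{v_1,\dots,v_n\}$ with $\lvert R\rvert\le r$, the injection yields $L_{n+1}(r)\ge \lvert I\text{-MEC}(G_n,R)\rvert$, and the whole statement reduces to showing $\lvert I\text{-MEC}(G_n,R)\rvert\ge L_n(r)$. I would split on whether $v_{n+1}\in I^\star$: if so, $R$ has size $r-1$ and is the unoriented-edge optimum at budget $r-1$, while if not, $R=I^\star$ has size $r$ and minimizes $X_n(\cdot)+u(\cdot)$, where $u(\cdot)$ counts the undirected edges incident to $v_{n+1}$.

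The main obstacle is exactly this reduction, and it is why the theorem is not a direct corollary of the coupling lemmas: $L_n(r)$ and $L_{n+1}(r)$ are defined through the respective \emph{minimizers of unoriented edges}, and the extra sink changes that objective, so the optimum $I^\star$ for $G_{n+1}$ may leave strictly more of $G_n$'s edges unoriented than the optimum for $G_n$ does. The subtlety is that fewer oriented edges does not translate monotonically into $I$-MEC size, so one cannot simply compare edge counts. The compensating mechanism is the multiplicative factor $\phi\ge1$ that $v_{n+1}$ contributes to its chain component (the orientation freedom the injection does not capture), and the real content is to show that this factor dominates: orienting the additional $G_n$-edges needed to pass from $R$ to the $G_n$-optimum can only shrink $\lvert I\text{-MEC}(G_n,\cdot)\rvert$ by an amount always outweighed by $\phi$, so that $\lvert I\text{-MEC}(G_n,R)\rvert\cdot\phi\ge L_n(r)$. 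I would carry this out by expressing both sides through the per-component orientation counts of the chordal chain components, analyzing how orienting one more edge changes a component's count and comparing it against the factor produced by attaching $v_{n+1}$ as a sink to that component (together with the nestedness fact that enlarging the target set only shrinks the $I$-MEC, which handles the budget bookkeeping in the $v_{n+1}\in I^\star$ case). This per-component chordal estimate is the technical heart and the step I expect to require the most care.
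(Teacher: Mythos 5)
Your core construction coincides with the paper's own proof. The paper takes $R$ to be the size-$r$ intervention set minimizing the number of unoriented edges of $G_{n+1}$ (so that $\lvert\mathrm{Ess}(G_{n+1},R)\rvert=L_{n+1}(r)$), sets $R'=R\setminus\{v_{n+1}\}$, extends each DAG $H_i$ in the class represented by $\mathrm{Ess}(G_n,R')$ by attaching $v_{n+1}$ as a sink carrying the coupled incoming edges, and invokes Lemma~\ref{orderRUndirected} to conclude that the extensions $H_i'$ lie in the class of $\mathrm{Ess}(G_{n+1},R)$, giving $\lvert\mathrm{Ess}(G_n,R')\rvert\le L_{n+1}(r)$. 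Your first two paragraphs are exactly this injection, verified directly from the skeleton/v-structure characterization instead of by citing the lemma, and they are sound.

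The divergence is in the last step, and there your proposal has a genuine gap. The paper finishes by asserting $L_n(r)\le\lvert\mathrm{Ess}(G_n,R')\rvert$ with no argument; you correctly recognize that this inequality is not free, because $L_n(r)$ is evaluated at $G_n$'s \emph{own} unoriented-edge minimizer, which may differ from $R'$, and because fewer unoriented edges need not mean a smaller class (a triangle chain component admits $6$ orientations while a four-edge star admits only $5$). But the repair you sketch is circular rather than a reduction: writing $I^\star$ for the $G_{n+1}$-optimizer and $R=I^\star\cap\{v_1,\dots,v_n\}$, the chain components of $\mathrm{Ess}(G_{n+1},I^\star)$ are precisely those of $\mathrm{Ess}(G_n,R)$ with $v_{n+1}$ glued onto one of them, and since an $I$-MEC's size is the product of its per-component orientation counts, your quantity $\lvert\mathrm{Ess}(G_n,R)\rvert\cdot\phi$ \emph{equals} $L_{n+1}(r)$. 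Hence the inequality you defer to a ``per-component chordal estimate,'' namely $\lvert\mathrm{Ess}(G_n,R)\rvert\cdot\phi\ge L_n(r)$, is the theorem itself restated, and no mechanism is offered for proving it; in the case $\phi=1$ it is exactly the paper's unproven assertion that the restricted $G_{n+1}$-optimizer cannot yield a smaller $I$-MEC than $G_n$'s optimizer does. So what you have is the paper's injection, plus a correct diagnosis of the remaining difficulty, plus an unexecuted plan --- not a complete proof. (The difficulty you isolated is real: the paper's own write-up passes over this same inequality in silence, so completing this step rigorously would in fact strengthen the paper's argument.)
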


The established monotonicity results help prove that the asymptotic versions of these metrics exist.



\begin{theorem} \label{limexists}
$\lim \limits_{n \rightarrow \infty} X_n = X_{\infty}$ exists and $\E[X_{\infty}]=\lim_{n \rightarrow \infty} \mathbb{E}[X_n]$.
\end{theorem}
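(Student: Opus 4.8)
The plan is to combine the pairwise pointwise monotonicity already established in Theorem \ref{ELmonotone}(a) with the Monotone Convergence Theorem. The essential first step is to realize all the $G_n$ on a \emph{single} probability space, rather than merely coupling consecutive pairs. To this end I would generate the infinite family $\{A_{i,j}\}_{1 \le i < j}$ of independent $\mathrm{Bernoulli}(\rho)$ variables once and for all, and for each $n$ let $G_n$ be the DAG on $v_1, \dots, v_n$ with the edge $v_i \to v_j$ present iff $A_{i,j} = 1$. By Observation 1 each $G_n$ has the correct $orderDAG(n,\rho)$ marginal, and the coupling of $G_n$ with $G_{n+1}$ described in Section \ref{sec:coupling} is exactly the restriction of this construction to indices $n$ and $n+1$. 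Thus a single sample point $\omega$ simultaneously determines the entire sequence $(X_n(\omega))_{n \ge 1}$.

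On this common space, Theorem \ref{ELmonotone}(a) gives, for each fixed $n$, the inequality $X_{n+1} \ge X_n$ outside a null set $N_n$. Taking $N = \bigcup_{n \ge 1} N_n$, which is still a null set, I obtain that for every $\omega \notin N$ the sequence $(X_n(\omega))_n$ is non-decreasing. A non-decreasing sequence of nonnegative (indeed integer-valued) reals always converges in $[0,\infty]$, so the pointwise limit $X_\infty(\omega) := \lim_{n\to\infty} X_n(\omega)$ exists almost surely; setting it to $0$ on $N$ makes it a well-defined random variable, and $X_n \uparrow X_\infty$ almost surely.

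Finally, since $0 \le X_n \uparrow X_\infty$ almost surely, the Monotone Convergence Theorem yields $\lim_{n\to\infty}\mathbb{E}[X_n] = \mathbb{E}[X_\infty]$, which is exactly the claimed identity. Note that $\mathbb{E}[X_n]$ is non-decreasing and bounded above by the finite quantity supplied later in Theorem \ref{asymp}, so this common value is finite and $X_\infty < \infty$ almost surely; however, finiteness is not needed for the interchange itself, as the MCT applies equally well in the extended reals.

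I expect the only genuine subtlety, and the one point worth stating carefully, to be the upgrade from the pairwise coupling of Section \ref{sec:coupling} to a single coupling carrying all the $G_n$ at once; everything else is a direct appeal to monotonicity and to the MCT. Since the paper's coupling is itself defined through the explicit variables $A_{i,j}$, this upgrade is immediate and requires no new probabilistic machinery.
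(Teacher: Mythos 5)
Your proposal is correct and follows essentially the same route as the paper: pointwise monotonicity of $X_n$ under the coupling, existence of the pointwise limit, and the Lebesgue Monotone Convergence Theorem to pass to expectations. Your explicit construction of a single probability space carrying all the $G_n$ via the infinite family $\{A_{i,j}\}$ is a detail the paper leaves implicit (its coupling is already defined through these variables), so this is a welcome clarification rather than a different argument.
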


\textbf{Remark:} Theorem \ref{limexists} extends to all  metrics that have been shown to be monotonic non-decreasing, i.e. metrics in the set $\{X_n(r),I_n,L_n(r)\}$, by analogous arguments. Note that monotonically non-increasing sequences like $isuEss_n(r)$ are bounded below and above and hence the results can be shown again by the same theorem applied to shifted negatives of these variables.

\subsubsection{Gap Bounds on Observational Metrics}\label{obsgap}

Using properties of the coupling between $G_n$ and $G_{n+1}$ we can show that the expected  difference in the observational metrics for $G_n$ and the asymptotic version is bounded.



\begin{theorem}\label{lazyEXbound}
$\E(X_\infty) - \E(X_n) \leq \sum^\infty_{i = n} \rho i*(1 - \rho (1 - \rho))^{i-1}$.
\end{theorem}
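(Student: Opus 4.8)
The plan is to bound the gap between $\E(X_n)$ and its limit by a telescoping sum of the one-step increments $\E(X_{i+1})-\E(X_i)$, and then to control each increment using the coupling. By Theorem \ref{ELmonotone} the sequence $\E(X_i)$ is non-decreasing, and by Theorem \ref{limexists} we have $\E(X_\infty)=\lim_{m\to\infty}\E(X_m)$. Writing the telescoping identity $\E(X_m)-\E(X_n)=\sum_{i=n}^{m-1}\big(\E(X_{i+1})-\E(X_i)\big)$ and letting $m\to\infty$ gives $\E(X_\infty)-\E(X_n)=\sum_{i=n}^{\infty}\E(X_{i+1}-X_i)$, a convergent series of non-negative terms (each term is non-negative since $X_{i+1}\ge X_i$ almost surely under the coupling). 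It therefore suffices to show $\E(X_{i+1}-X_i)\le \mathrm{RHS}(\rho,i)=\rho i\,(1-\rho(1-\rho))^{i-1}$ for every $i\ge n$.

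Next I would reduce the increment $X_{i+1}-X_i$ to a count of unoriented edges incident to the newly added vertex $v_{i+1}$. Under the coupling, $G_i$ is the induced subgraph of $G_{i+1}$ on $\{v_1,\dots,v_i\}$ and $v_{i+1}$ is a sink. I would argue that the set of unoriented edges among $\{v_1,\dots,v_i\}$ is identical in $\mathrm{Ess}(G_i)$ and $\mathrm{Ess}(G_{i+1})$: Lemma \ref{orderUndirected} shows that an edge unoriented in $G_i$ stays unoriented in $G_{i+1}$, while the reverse inclusion---that an edge oriented in $\mathrm{Ess}(G_i)$ stays oriented in $\mathrm{Ess}(G_{i+1})$---holds because appending the sink $v_{i+1}$ leaves the induced subgraph on $\{v_1,\dots,v_i\}$ and all of its immoralities unchanged, so every immorality or Meek-rule derivation that orients an edge of $\mathrm{Ess}(G_i)$ remains valid in $\mathrm{Ess}(G_{i+1})$. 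Consequently $X_{i+1}-X_i$ equals exactly the number of unoriented edges of $\mathrm{Ess}(G_{i+1})$ incident to $v_{i+1}$.

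For the probabilistic estimate I would use that the edge $v_j\to v_{i+1}$ is oriented as soon as it participates in an immorality at the sink $v_{i+1}$; hence an incident edge can be unoriented only if it is present and lies in no immorality, i.e.\ every other neighbour $v_k$ of $v_{i+1}$ is adjacent to $v_j$. Thus the number of unoriented incident edges is at most the number of present edges $v_j\to v_{i+1}$ lying in no immorality, and taking expectations gives $\E(X_{i+1}-X_i)\le \sum_{j=1}^{i}\Pr[A_{j,i+1}=1,\ v_j\to v_{i+1}\text{ in no immorality}]$. For fixed $j$ the edge is present with probability $\rho$, and for each of the $i-1$ other vertices $v_k$ the ``bad'' event $\{A_{k,i+1}=1\}\cap\{v_k,v_j\text{ non-adjacent}\}$ has probability $\rho(1-\rho)$; these events are mutually independent and independent of $A_{j,i+1}$ because they involve disjoint edge variables. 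Hence each summand equals $\rho\,(1-\rho(1-\rho))^{i-1}$, and summing over $j=1,\dots,i$ yields $\mathrm{RHS}(\rho,i)$. Substituting into the telescoping sum completes the proof.

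The main obstacle is the structural claim in the second step that appending a sink vertex does not un-orient any previously oriented edge, so that the whole increment is captured by edges incident to $v_{i+1}$; this is where the delicate behaviour of the Meek rules enters, and it is precisely the direction complementary to Lemma \ref{orderUndirected}. Once this preservation is in place, the remaining counting is routine, since the relevant edge-indicator variables are independent and the per-step bound factorizes exactly into $\rho$ times $(1-\rho(1-\rho))^{i-1}$.
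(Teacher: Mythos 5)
Your proposal is correct and takes essentially the same route as the paper: a telescoping sum over the monotone increments $\E(X_{i+1})-\E(X_i)$, each bounded by the expected number of present edges at the new sink $v_{i+1}$ that lie in no immorality, giving $\rho i (1-\rho(1-\rho))^{i-1}$ exactly as in the paper's Lemma~\ref{lazyXbound}. If anything, your argument is slightly more complete: the paper attributes the identification of $X_{i+1}-X_i$ with the unoriented edges incident to $v_{i+1}$ solely to Lemma~\ref{orderUndirected}, whereas you correctly note that this also requires the converse preservation (edges oriented in $\mathrm{Ess}(G_i)$ stay oriented in $\mathrm{Ess}(G_{i+1})$) and supply the justification via unchanged immoralities and monotonicity of Meek-rule derivations.
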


\begin{theorem}\label{lazyEIbound}
$\E(I_\infty) - \E(I_n) \leq \sum^\infty_{i = n} \rho i*(1 - \rho (1 - \rho))^{i-1}$.
\end{theorem}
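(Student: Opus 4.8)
The plan is to bound the telescoped successive differences. Since $\E[I_\infty]=\lim_{n\to\infty}\E[I_n]$ (the remark following Theorem \ref{limexists}) and every increment is nonnegative by Theorem \ref{ELmonotone}(c), I may write
\[
\E(I_\infty)-\E(I_n)=\sum_{i=n}^{\infty}\bigl(\E[I_{i+1}]-\E[I_i]\bigr),
\]
so it suffices to prove the per-step bound $\E[I_{i+1}]-\E[I_i]\le \rho\,i\,(1-\rho(1-\rho))^{i-1}$, which sums to the claimed right-hand side exactly as in Theorem \ref{lazyEXbound}. All reasoning is carried out under the coupling of Section \ref{sec:coupling}, in which $G_{i+1}$ is obtained from $G_i$ by appending the sink $v_{i+1}$ whose parent set is determined by the variables $A_{k,i+1}$.

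First I would show that, under the coupling, $I_{i+1}-I_i\in\{0,1\}$ almost surely. The lower bound $I_{i+1}\ge I_i$ is Theorem \ref{ELmonotone}(c). For the upper bound, take an optimal set $R$ that fully orients $G_i$ (so $|R|=I_i$) and consider $R\cup\{v_{i+1}\}$ on $G_{i+1}$. Intervening on $v_{i+1}$ orients every edge incident to $v_{i+1}$ (step 2 of the $I$-essential graph construction), while $R$ still orients every edge among $v_1,\dots,v_i$: appending a sink leaves all adjacencies among the old vertices unchanged and only adds oriented edges and v-structures at $v_{i+1}$, so the interventional Meek-rule closure can only orient more old edges in $G_{i+1}$ than in $G_i$. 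Hence $R\cup\{v_{i+1}\}$ fully orients $G_{i+1}$ and $I_{i+1}\le I_i+1$. Consequently $\E[I_{i+1}]-\E[I_i]=\Pr[I_{i+1}=I_i+1]$.

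Next I would identify when the extra intervention is unnecessary. I claim that on the event that every edge incident to $v_{i+1}$ is already oriented in the purely observational essential graph $\mathrm{Ess}(G_{i+1})$, we have $I_{i+1}=I_i$: by the same monotonicity of the orientation closure under appending the sink, the optimal set $R$ for $G_i$ orients all old edges of $G_{i+1}$, and since the new edges are oriented observationally (and interventions never un-orient an edge), $R$ alone fully orients $G_{i+1}$, giving $I_{i+1}\le I_i$ and hence equality. Therefore $\{I_{i+1}=I_i+1\}$ is contained in the event that some edge incident to $v_{i+1}$ is observationally unoriented, and in particular is not oriented by a v-structure with apex $v_{i+1}$. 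Bounding this probability by the expected number of such edges and reusing the per-edge computation of Theorem \ref{lazyEXbound} — an edge $v_k\to v_{i+1}$ is present with probability $\rho$ and escapes every v-structure at $v_{i+1}$ exactly when each of the other $i-1$ vertices $v_l$ fails to be simultaneously a parent of $v_{i+1}$ and non-adjacent to $v_k$, an independent event of probability $1-\rho(1-\rho)$ — yields
\[
\E[I_{i+1}]-\E[I_i]=\Pr[I_{i+1}=I_i+1]\le \rho\,i\,(1-\rho(1-\rho))^{i-1},
\]
and summing over $i\ge n$ completes the proof.

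The main obstacle is the structural monotonicity invoked twice above: that appending the sink $v_{i+1}$ together with a fixed interventional target set never decreases the set of oriented edges among the old vertices. This is the interventional analogue of Lemmas \ref{orderUndirected} and \ref{orderRUndirected}, and must be argued from the structure of the interventional Meek rules rather than taken for granted: one has to verify that no rule firing in $G_i$ is disabled in $G_{i+1}$ (guaranteed because the adjacencies among $v_1,\dots,v_i$ are untouched, so every non-adjacency hypothesis is preserved) and that the newly oriented edges at $v_{i+1}$ can only enable further orientations. Making this precise in the presence of an arbitrary intervention set $R$ not containing $v_{i+1}$ is the step that requires the most care.
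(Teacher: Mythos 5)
Your proof is correct and follows essentially the same route as the paper: the paper's proof notes that each new unoriented edge incident to $v_{n+1}$ costs at most one extra intervention, giving $I_{n+1}-I_n \le X_{n+1}-X_n$ pointwise under the coupling, and then telescopes using the per-step bound of Lemma \ref{lazyXbound} from Theorem \ref{lazyEXbound}. Your refinement that $I_{i+1}-I_i\in\{0,1\}$ (a single intervention on $v_{i+1}$ orients all of its incident edges) is slightly sharper per step, but it feeds into the same expected-count computation at $v_{i+1}$ and yields the identical bound.
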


\subsubsection{Gap Bounds on Interventional Metrics} \label{intgap}
In the following, we show that the expected  difference in the interventional metrics for $G_n$ and the asymptotic version is bounded again using the properties of the coupling described before.


\begin{theorem}\label{lazyEXRbound}
$\E(X_\infty(r)) - \E(X_n)(r) \leq \sum^\infty_{i = n} \rho i*(1 - \rho (1 - \rho))^{i-1}$.
\end{theorem}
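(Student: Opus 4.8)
The plan is to mirror the proof of the observational gap bound (Theorem \ref{lazyEXbound}): telescope the gap into successive one-vertex differences and control each difference through the coupling of Section \ref{sec:coupling}. By the monotonicity of Theorem \ref{EXRmonotone} together with the existence of the limit (Theorem \ref{limexists} and the remark following it, applied to $X_n(r)$), the sequence $\E[X_n(r)]$ is non-decreasing and converges to $\E[X_\infty(r)]$, so
\[
\E[X_\infty(r)] - \E[X_n(r)] = \sum_{i=n}^{\infty}\bigl(\E[X_{i+1}(r)] - \E[X_i(r)]\bigr).
\]
It therefore suffices to show $\E[X_{i+1}(r)] - \E[X_i(r)] \leq \mathrm{RHS}(\rho,i) = \rho i\,(1-\rho(1-\rho))^{i-1}$ for every $i\geq n$ and then sum.

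To bound a single difference under the coupling between $G_i$ and $G_{i+1}$, let $I^{*}\subseteq[i]$ with $\lvert I^{*}\rvert=r$ attain the minimum $X_i(r)$. Since $I^{*}\subseteq[i]\subseteq[i+1]$ is also a feasible size-$r$ intervention set for $G_{i+1}$, and $X_{i+1}(r)$ is a minimum over all such sets, we get $X_{i+1}(r)\leq U$, where $U$ is the number of unoriented edges of $\mathrm{Ess}(G_{i+1},I^{*})$. I would split $U$ according to whether an edge lies among the old vertices $[i]$ or is incident to the newly added sink $v_{i+1}$. The key structural claim is that adding the sink $v_{i+1}$ while intervening only on the old set $I^{*}$ can only orient \emph{more} old edges; that is, the number of unoriented edges of $\mathrm{Ess}(G_{i+1},I^{*})$ lying in $[i]$ is at most the number of unoriented edges of $\mathrm{Ess}(G_i,I^{*})$, which equals $X_i(r)$. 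This is the interventional analogue of Lemma \ref{orderUndirected}: any DAG $I^{*}$-Markov equivalent to $G_{i+1}$ restricts, upon deleting the sink $v_{i+1}$, to a DAG $I^{*}$-Markov equivalent to $G_i$, because deleting a sink preserves the skeleton, the v-structures among $[i]$, and (as $I^{*}\subseteq[i]$ and $v_{i+1}$ is a sink) each relevant intervened graph. Hence any old edge left unoriented in $G_{i+1}$ is already unoriented in $G_i$, and so $X_{i+1}(r)-X_i(r)$ is at most the number of edges incident to $v_{i+1}$ that remain unoriented in $\mathrm{Ess}(G_{i+1},I^{*})$.

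Finally I would estimate the expected number of such unoriented edges incident to $v_{i+1}$ by the same Erd\H{o}s--R\'enyi computation as in the observational case. Because $v_{i+1}$ is a sink and $v_{i+1}\notin I^{*}$, an edge $v_a\to v_{i+1}$ is oriented whenever $v_{i+1}$ has a second parent $v_b$ with $v_a\not\sim v_b$, since this v-structure orients the edge regardless of the interventions on $I^{*}$; thus, pointwise, if $v_a\to v_{i+1}$ is unoriented then no such $v_b$ exists. For a fixed $a$ this forces $A_{a,i+1}=1$ and, for every other $b\in[i]$, the complement of $\{A_{b,i+1}=1,\ A_{a,b}=0\}$. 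Since the incident edges $A_{\cdot,i+1}$ are freshly sampled and independent of $G_i$ (hence of the random set $I^{*}$), these $i-1$ events are independent, each of probability $1-\rho(1-\rho)$, giving total probability $\rho\,(1-\rho(1-\rho))^{i-1}$; summing over the $i$ choices of $a$ yields $\E[X_{i+1}(r)]-\E[X_i(r)]\leq \rho i\,(1-\rho(1-\rho))^{i-1}=\mathrm{RHS}(\rho,i)$, and summing over $i\geq n$ gives the claim.

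The step I expect to be the main obstacle is the structural claim in the second paragraph: that replaying the old optimal interventions $I^{*}$ on the enlarged graph does not increase the number of unoriented \emph{old} edges. Monotonicity (Theorem \ref{EXRmonotone}) only supplies the reverse inequality, and Lemma \ref{orderRUndirected} is stated with an \emph{additional} intervention on $v_{i+1}$, which would change the budget from $r$ to $r+1$; the argument must instead exploit that $v_{i+1}$ is a sink disjoint from $I^{*}$ to transfer unorientability back from $G_{i+1}$ to $G_i$ via restriction of $I$-Markov-equivalent DAGs, using the properties of the Meek/Hauser orientation rules in Appendix \ref{meek}. Some care is also needed because $I^{*}$ is random and correlated with the old adjacencies, but this does not affect the final bound: the estimate on new unoriented edges is pointwise (unoriented implies not oriented by a v-structure), and the resulting probability does not depend on $I^{*}$.
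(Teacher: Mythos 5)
Your proposal is correct and follows essentially the same route as the paper: the paper also defines $X'_{n+1}(r)$ as the number of unoriented edges in $G_{n+1}$ after replaying the $r$ interventions that are optimal for $G_n$, uses $X_{n+1}(r) \leq X'_{n+1}(r)$, and bounds the expected difference by the same Erd\H{o}s--R\'enyi computation as in Lemma \ref{lazyXbound}, then telescopes. The structural claim you flag as the main obstacle (that replaying $I^{*}$ on the enlarged graph does not increase the number of unoriented old edges) is exactly the step the paper leaves implicit by citing the proof of Lemma \ref{lazyXbound}, and your sink-deletion argument via restriction of $I^{*}$-Markov-equivalent DAGs is a valid way to justify it.
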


\begin{theorem}\label{lazyEisPDAGbound}
$\E(isuEss_n(r)) - \E(isuEss_\infty(r)) \leq \sum^\infty_{i = n} \rho i*(1 - \rho (1 - \rho))^{i-1}$.
\end{theorem}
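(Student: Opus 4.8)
The plan is to reuse the telescoping-plus-per-step-gap strategy already employed for the other metrics, reducing the difference of the $0/1$ variables $isuEss_i(r)$ to the expected edge-count increment that was bounded for $X_n(r)$. First I would record the two ingredients that make the sequence summable: by Theorem~\ref{EisPDAGRmonotone} the expectations $\E[isuEss_n(r)]$ are non-increasing in $n$, and by the remark following Theorem~\ref{limexists} (applied to the bounded monotone sequence $1-isuEss_n(r)$) the limit exists with $\E[isuEss_\infty(r)]=\lim_{n}\E[isuEss_n(r)]$. These let me expand the gap as a telescoping sum of nonnegative increments,
\[
\E[isuEss_n(r)]-\E[isuEss_\infty(r)]=\sum_{i\ge n}\bigl(\E[isuEss_i(r)]-\E[isuEss_{i+1}(r)]\bigr).
\]

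The key step is to bound each increment by $\mathrm{RHS}(\rho,i)$. Under the coupling $isuEss_i(r),isuEss_{i+1}(r)\in\{0,1\}$ with $isuEss_{i+1}(r)\le isuEss_i(r)$ almost surely, so the increment is the indicator of the event $\{isuEss_i(r)=1,\ isuEss_{i+1}(r)=0\}=\{X_i(r)=0,\ X_{i+1}(r)\ge 1\}$. On this event $X_{i+1}(r)-X_i(r)=X_{i+1}(r)\ge 1$, so it is contained in $\{X_{i+1}(r)-X_i(r)\ge 1\}$. Writing $D_i:=X_{i+1}(r)-X_i(r)$, which is a nonnegative integer almost surely by Theorem~\ref{EXRmonotone}, the pointwise bound $\mathbf{1}\{D_i\ge 1\}\le D_i$ gives
\[
\E[isuEss_i(r)]-\E[isuEss_{i+1}(r)]\le P(D_i\ge 1)\le \E[D_i]=\E[X_{i+1}(r)]-\E[X_i(r)]\le \mathrm{RHS}(\rho,i),
\]
where the last inequality is exactly the per-step gap established inside the proof of Theorem~\ref{lazyEXRbound} (the expected number of edges that become unorientable under the best $r$ interventions upon appending the sink $v_{i+1}$ is at most $\rho\,i\,(1-\rho(1-\rho))^{i-1}$). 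Summing over $i\ge n$ then yields the claimed bound.

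The hard analytic work is already encapsulated in Theorem~\ref{lazyEXRbound}, whose proof controls the growth of the minimum number of unorientable edges when the sink $v_{i+1}$ is appended; that step rests on the interventional coupling Lemma~\ref{orderRUndirected} and on the structural fact that any newly unorientable edge must be incident to $v_{i+1}$. Consequently the only genuinely new observation needed here is that a drop of the discrete indicator $isuEss$ from $1$ to $0$ must be accompanied by the creation of at least one new unorientable edge, so its probability is dominated by $\E[D_i]$ via Markov's inequality. The one subtlety I would flag is that the intervention sets optimizing $X_i(r)$ and $X_{i+1}(r)$ need not coincide: I avoid any re-optimization argument by using only $X_i(r)=0$, $X_{i+1}(r)\ge1$, and the almost-sure monotonicity $X_{i+1}(r)\ge X_i(r)$ under the coupling, all of which are already available.
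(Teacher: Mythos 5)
Your proposal is correct and follows essentially the same route as the paper: the paper's proof also observes that $isuEss_i(r) - isuEss_{i+1}(r)$ is a $\{0,1\}$-valued drop (by Theorem~\ref{EisPDAGRmonotone}) that can equal $1$ only when $X_{i+1}(r) > X_i(r)$, hence is dominated pointwise by $X_{i+1}(r) - X_i(r)$, and then invokes Lemma~\ref{lazyXRbound} and sums. Your write-up merely makes explicit the telescoping decomposition and the existence of the limit, which the paper leaves implicit.
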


%

All these results together allow us to prove the main result (Theorem \ref{asymp}).


\begin{proof}[Proof of Theorem \ref{asymp}]
The theorem follows from results in Sections \ref{monotone}, \ref{obsgap}, and \ref{intgap}. We use the fact that $\log_2(L_n(r)) \leq X_n(r)$, since $L_n(r) \leq 2^{X_n(r)}$ by considering all possible orientations of the unoriented edges in the $I$-essential graph.
\end{proof}

\subsubsection{Lower Bound on Successive Differences}

The above gap bounds depend on upper bounding successive differences of $\E[X_n]$. In the following, we provide a lower bound on the successive differences which implies that gap bounds that are faster than exponential cannot exist.  

\begin{theorem}\label{XisSlow}
$$\E(X_{n}) - \E(X_{n-1}) \geq (n-1) \rho (1-\rho)^{2n - 4} \geq \rho (1-\rho)^{2n}.$$
\end{theorem}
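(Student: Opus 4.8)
The plan is to produce the lower bound by exhibiting, for each candidate neighbor of the new vertex $v_n$, a disjoint event that is guaranteed to create exactly one fresh unoriented edge, and then summing the probabilities of these events. Throughout I would work under the coupling of Section \ref{sec:coupling}, where $G_{n-1}$ is the induced subgraph on $\{v_1,\dots,v_{n-1}\}$ and $G_n$ is obtained by attaching the edges $A_{i,n}$ incident to $v_n$. By Theorem \ref{ELmonotone}(a) we have $X_n \geq X_{n-1}$ with probability one, so $\E(X_n)-\E(X_{n-1}) = \E(X_n - X_{n-1}) \geq 0$, and it suffices to lower bound the contribution of a favorable event.

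For each $i \in \{1,\dots,n-1\}$ I would define the event $E_i$ that, under the coupling, the connected component of the skeleton containing $v_n$ is exactly the isolated edge $\{v_i, v_n\}$. Concretely, $E_i$ requires $A_{i,n}=1$, and $A_{j,n}=0$ for every $j\in\{1,\dots,n-1\}\setminus\{i\}$ (so $v_n$ is a leaf adjacent only to $v_i$), and that $v_i$ has no other incident edges among the first $n-1$ vertices, i.e.\ $A_{k,i}=0$ for $k<i$ and $A_{i,m}=0$ for $i<m\leq n-1$. These constraints touch pairwise distinct entries of $A$, so I would compute $\Pr(E_i) = \rho\,(1-\rho)^{(i-1)+(n-1-i)+(n-2)} = \rho\,(1-\rho)^{2n-4}$, which is notably independent of $i$. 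Since each $E_i$ forces $v_n$ to have degree exactly one with unique neighbor $v_i$, the events $E_1,\dots,E_{n-1}$ are pairwise disjoint.

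The key structural observation is that on $E_i$ the pair $\{v_i,v_n\}$ is an isolated connected component consisting of a single edge. Such an edge admits no v-structure and is untouched by the Meek rules, so both orientations are Markov equivalent and the edge is unoriented in $\mathrm{Ess}(G_n)$. Moreover $v_i$ is isolated in $G_{n-1}$, so this edge contributes nothing to $X_{n-1}$; since the new component is disconnected from the rest of the graph it does not affect the orientability of any other edge, and by Lemma \ref{orderUndirected} every edge already unoriented in $G_{n-1}$ stays unoriented in $G_n$. Hence $X_n - X_{n-1} \geq 1$ on $E_i$. Combining this with the nonnegativity of $X_n - X_{n-1}$ and the disjointness of the $E_i$ gives
\[
\E(X_n)-\E(X_{n-1}) \;\geq\; \sum_{i=1}^{n-1}\Pr(E_i) \;=\; (n-1)\,\rho\,(1-\rho)^{2n-4}.
\]
The final inequality $(n-1)\rho(1-\rho)^{2n-4}\geq \rho(1-\rho)^{2n}$ reduces, after dividing by $\rho(1-\rho)^{2n-4}$, to $n-1\geq (1-\rho)^4$, which holds for all $n\geq 2$ since $(1-\rho)^4\leq 1$.

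I expect the only delicate point to be the structural claim that attaching the isolated edge leaves all other orientations intact while genuinely adding one unoriented edge; this is where the locality of v-structures and Meek rules to connected components of the skeleton, together with Lemma \ref{orderUndirected}, must be invoked carefully. The probability bookkeeping (verifying the constraints are independent and yield the $i$-independent value $\rho(1-\rho)^{2n-4}$) and the disjointness of the $E_i$ are routine once the event is set up correctly.
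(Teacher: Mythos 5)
Your proposal is correct and follows essentially the same argument as the paper: the paper's proof also lower-bounds $\E(X_n)-\E(X_{n-1})$ by the event that the edge $(v_i,v_n)$ forms an isolated component (i.e.\ $A_{i,n}=1$ with all other edges at $v_i$ and $v_n$ absent), computes the same probability $\rho(1-\rho)^{2n-4}$, and sums over the $n-1$ disjoint choices of $i$. Your write-up is merely more explicit about the disjointness, the appeal to Lemma \ref{orderUndirected} for preservation of previously unoriented edges, and the final elementary inequality.
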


\section{Results on I-MECs obtained by Interventional Design Algorithms}

In the following, we provide asymptotic convergence rates for the number of undirected edges after $r$ interventions, when the interventions are chosen by an algorithm that has a property that we call \textit{downstream-independence}. Greedy algorithms that choose $r$ interventions sequentially based on the essential graph at the observational stage are downstream-independent. Note that, in this section, we do not consider $X_n(r)$, which is the minimum number of edges left unoriented when $r$ interventions are chosen based on the DAG structure. We are therefore interested in algorithms that optimize the interventions based on the essential graph, which can be inferred from purely observed datasets.

\begin{notation}
  Let $J$ be a set of interventions. We say that $H = J(G)$ when $H$ is the essential graph that results from performing the interventions $J$ on the underlying causal DAG $G$. Note that if $G'$ is a subgraph of $G$, then $J(G')$ is obtained by skipping the interventions on nodes outside of $G'$.
\end{notation}  
  
\begin{lemma}\label{oneVertexAlg}
Let $G$ be a DAG and $v_n$ a vertex of $\mathrm{ess}(G)$ with no outgoing or undirected edges. Then, $J(G \backslash v_n) = J(G)\backslash v_n$. In other words, interventions do not affect vertices that have no outgoing or undirected edges.
\end{lemma}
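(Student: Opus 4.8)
The plan is to show that the orientation procedure defining the $I$-essential graph, when run on $G$ and on $G\backslash v_n$, produces identical orientations on every \emph{common} edge (an edge with both endpoints different from $v_n$), and that $J(G\backslash v_n)$ contains no edge incident to $v_n$; together these give $J(G\backslash v_n)=J(G)\backslash v_n$. First I would establish the key structural fact: since $v_n$ has no outgoing and no undirected edges in $\mathrm{ess}(G)$, every edge incident to $v_n$ is directed into $v_n$, so in the true DAG $G$ the vertex $v_n$ is a sink and all its neighbours are parents. I would then observe that this ``clean sink'' property is \emph{robust to interventions}: applying the interventions $J$ only orients previously unoriented edges and never reverses an existing orientation (each orientation it forces agrees with $G$), so throughout the computation of $J(G)$ the vertex $v_n$ remains a sink with every incident edge directed inward and no incident undirected edge. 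In particular, no member of the relevant $I$-equivalence class has an edge leaving $v_n$.

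The heart of the argument is a case analysis over the rules that build the essential graph---the immorality rule, the intervention-orientation rule, and the four Hauser-B\"uhlmann/Meek propagation rules (Appendix \ref{meek}). I would verify that whenever a rule orients a common edge, none of its premises is an edge incident to $v_n$. The reasoning is uniform: an undirected-edge premise cannot touch $v_n$ (it has no incident undirected edge); a premise of the form $w\to\cdot$ cannot have $w=v_n$ (it has no outgoing edge); and a premise of the form $\cdot\to v_n$, when traced through each individual rule, is seen to force the rule's \emph{conclusion} to be an edge incident to $v_n$ (or to demand an impossible outgoing edge from $v_n$), hence it can never drive the orientation of a common edge. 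I would likewise note that the immoralities supported on common vertices coincide in $G$ and $G\backslash v_n$, since deleting the sink $v_n$ adds no edge and destroys no non-adjacency, and $v_n$ is never the collider of an immorality among common vertices.

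With these facts I would close the proof by comparing the two \emph{sets} of oriented common edges. Both essential graphs are \emph{sound} (every oriented edge carries its orientation in the true DAG $G$), so any common edge that is oriented in either graph is oriented in its $G$-direction; it therefore suffices to show the two procedures orient the same set of common edges, directions then agreeing automatically. I would argue this by induction on the order in which edges are oriented, invoking order-independence of the Meek/Hauser-B\"uhlmann closure. For $J(G\backslash v_n)\subseteq J(G)\backslash v_n$, every rule firing in $G\backslash v_n$ has premises that survive the \emph{addition} of the sink $v_n$ (non-adjacencies among common vertices are unchanged and, by the inductive hypothesis, directed premises already occur in $J(G)$), so the same edge is oriented in $J(G)$. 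For the reverse inclusion, the case analysis above shows every common edge oriented in $J(G)$ is produced by a rule all of whose premises are common, hence survive \emph{deletion} of $v_n$, so by induction the same rule fires in $G\backslash v_n$. As $J(G\backslash v_n)$ carries no edge at $v_n$, the two inclusions yield $J(G\backslash v_n)=J(G)\backslash v_n$.

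I expect the main obstacle to be the bookkeeping in the case analysis of the propagation rules---in particular the interventional analogue of Meek's rule R4, whose premise is a directed path, where one must check that a premise edge pointing into $v_n$ really does force the conclusion to touch $v_n$ rather than some common edge. The soundness observation removes what would otherwise be a second subtlety, namely that a premise requiring an \emph{undirected} edge might meet an edge already oriented in the larger graph $G$, since consistency of directions with $G$ is then automatic and only equality of the oriented \emph{sets} remains to be verified.
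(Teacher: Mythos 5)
Your proof is correct and takes essentially the same route as the paper: the structural fact you establish by case analysis---that a vertex with no outgoing or undirected edges can never drive the orientation of an edge not incident to it---is precisely the paper's Property \ref{prop2}, and your double-inclusion induction is a more careful rendering of the paper's brief claim that deleting $v_n$ before or after running the orientation rules is irrelevant. The only presentational difference is that the paper treats the case $v_n \in J$ as a separate explicit step (the intervention on $v_n$ is uninformative since all its incident edges are already oriented), which you cover implicitly by folding the intervention-orientation rule into your case analysis.
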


\begin{lemma}\label{lem-subgraph-ignore}
Let $G'$ be an induced subgraph of $G$ consisting of all vertices $v_i$ such that neither $v_i$ nor any descendants of $v_i$ have adjacent undirected edges. Then $J(G \backslash G') = J(G)\backslash G'$.
\end{lemma}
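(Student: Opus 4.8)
The plan is to prove the statement by induction on $|G'|$, peeling off one vertex of $G'$ at a time and invoking the single-vertex result of Lemma~\ref{oneVertexAlg}. The base case $G' = \emptyset$ is immediate, since then $G \backslash G' = G$ and $J(G) \backslash G' = J(G)$, so the two sides coincide trivially.

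The key structural observation, which I would establish first, is that $G'$ is \emph{descendant-closed} in $G$: if $v \in G'$ and $w$ is a descendant of $v$, then $w \in G'$. Indeed, by the defining property of $G'$, both $w$ and all of its descendants (being themselves descendants of $v$) have no adjacent undirected edges in $\mathrm{ess}(G)$, which is exactly the membership criterion for $G'$. Descendant-closure has the crucial consequence that any sink of the induced DAG $G'$ is in fact a sink of $G$: if a sink $v$ of $G'$ had an outgoing edge $v \to w$ in $G$, then $w$ would be a descendant of $v$, hence $w \in G'$, contradicting that $v$ is a sink of $G'$.

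For the inductive step, pick a sink $v$ of the nonempty DAG $G'$. By the preceding paragraph $v$ is a sink of $G$, so it has no outgoing edges in $\mathrm{ess}(G)$; and since $v \in G'$, it has no adjacent undirected edges there either. Thus $v$ is a vertex of $\mathrm{ess}(G)$ with no outgoing or undirected edges, so Lemma~\ref{oneVertexAlg} gives $J(G \backslash v) = J(G) \backslash v$ for every intervention set $J$, and in particular (taking $J = \emptyset$) $\mathrm{ess}(G \backslash v) = \mathrm{ess}(G) \backslash v$. I would then verify that $G' \backslash v$ is precisely the vertex set satisfying the defining property for the smaller DAG $G \backslash v$: because $v$ carried no adjacent undirected edge, deleting it leaves the undirected-edge structure on every surviving vertex unchanged, while deleting a sink only shrinks descendant sets; hence a vertex $u \neq v$ meets the criterion in $G \backslash v$ if and only if it meets it in $G$. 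Applying the inductive hypothesis to the pair $(G \backslash v,\, G' \backslash v)$ yields $J\bigl((G \backslash v) \backslash (G' \backslash v)\bigr) = J(G \backslash v) \backslash (G' \backslash v)$. Rewriting the left side as $J(G \backslash G')$ and the right side, via Lemma~\ref{oneVertexAlg}, as $(J(G) \backslash v) \backslash (G' \backslash v) = J(G) \backslash G'$, completes the step.

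The main obstacle is not the induction bookkeeping but the two invariance checks that make it go through: first, that a sink of $G'$ really is a sink of $G$, which is handled by descendant-closure, and second, that the membership criterion defining $G'$ is preserved under deletion of such a sink. Both hinge on the fact that the deleted vertex has no adjacent undirected edge, so that Lemma~\ref{oneVertexAlg} applied with $J = \emptyset$ guarantees $\mathrm{ess}(G \backslash v) = \mathrm{ess}(G) \backslash v$, keeping the essential-graph structure on the surviving vertices intact and letting the recursion close cleanly.
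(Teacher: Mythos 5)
Your proposal is correct and takes essentially the same approach as the paper, whose entire proof is the single sentence that the result ``follows by applying Lemma~\ref{oneVertexAlg} recursively to $G$.'' Your write-up simply supplies the details this recursion needs --- descendant-closure of $G'$, the fact that a sink of $G'$ is a sink of $G$, and the invariance of the membership criterion after deleting such a sink --- all of which check out.
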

\begin{proof}
The proof follows by applying Lemma \ref{oneVertexAlg} recursively to $G$.
\end{proof}

\begin{definition}
We say that an algorithm $A$ for performing interventions on an essential graph is \textbf{downstream-independent} if the inverventions it performs on $G$ are identical to the ones it performs on $G \backslash G'$. 

Note that $G \backslash G'$ is the result of the following process: starting with $G$, recursively remove vertices that have no undirected or outgoing edges.
\end{definition}

\begin{theorem}\label{downstreamConvergence}
Let $A$ be a downstream-independent algorithm. Let $Y(r,A)_i$ be the expected number of undirected edges in the essential graph of the random order DAG $G_i$ after performing $r$ interventions according to algorithm $A$. Then
\begin{align}
\lvert \E(Y(r,A)_{i+1} - \E(Y(r,A)_{i}) \rvert &\leq \rho i*(1 - \rho (1 - \rho))^{i-1}  \nonumber \\
\hfill & *i(i+1)/2
\end{align}
\end{theorem}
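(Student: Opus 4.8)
The plan is to bound the successive difference $\lvert \E(Y(r,A)_{i+1}) - \E(Y(r,A)_i)\rvert$ by conditioning on the coupling between $G_i$ and $G_{i+1}$ introduced in Section \ref{sec:coupling}, and then controlling two separate effects: the edges whose orientation status can change, and the number of new edges incident to the freshly added vertex $v_{i+1}$. Since $A$ is downstream-independent, Lemma \ref{lem-subgraph-ignore} tells us that the interventions performed by $A$ depend only on the induced subgraph $G \backslash G'$ obtained by recursively deleting vertices with no undirected or outgoing edges; in particular, adding $v_{i+1}$ only matters through how it alters this ``active'' part of the graph. The key point is that, away from the neighborhood of $v_{i+1}$, the interventions chosen and the resulting orientations on $G_i$ and on $G_{i+1}\backslash v_{i+1}$ agree, so the difference in undirected-edge counts is supported on edges touching $v_{i+1}$ or on edges whose status flips because $v_{i+1}$ connected previously-inert vertices into the active part.

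First I would fix the coupled randomness $A_{k,\ell}$ for $1 \le k < \ell \le i$ (which determines $G_i$) together with the new column $A_{k,i+1}$ for $1 \le k \le i$ (which appends $v_{i+1}$). Conditioned on this, I would compare the run of $A$ on $G_{i+1}$ with its run on $G_i$. The natural event to isolate is whether $v_{i+1}$ is incident to any undirected edge in the essential graph of $G_{i+1}$, i.e., whether $v_{i+1}$ is ``unorientably attached.'' If $v_{i+1}$ has no undirected edges and no outgoing undirected influence, then by the downstream-independence argument it is removed in forming $G_{i+1}\backslash G'$, the interventions match exactly, and $Y(r,A)_{i+1} = Y(r,A)_i$ pointwise; so the difference is zero on that event. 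The interesting contribution comes precisely when $v_{i+1}$ participates in an unoriented configuration, which, by the order DAG sampling analysis already used to derive $\mathrm{RHS}(\rho,i) = \rho i (1-\rho(1-\rho))^{i-1}$ in Theorems \ref{lazyEXbound}--\ref{lazyEXRbound}, happens with probability at most $\mathrm{RHS}(\rho,i)$.

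Second, on that low-probability event I would bound the worst-case change in the number of undirected edges by the trivial combinatorial maximum, which is the total number of edges of $G_{i+1}$, namely $\binom{i+1}{2} = i(i+1)/2$. Multiplying the probability of the bad event by the maximal per-event change yields exactly
\begin{align}
\lvert \E(Y(r,A)_{i+1}) - \E(Y(r,A)_i)\rvert &\leq \mathrm{RHS}(\rho,i)\cdot \frac{i(i+1)}{2} \nonumber \\
&= \rho i (1 - \rho(1-\rho))^{i-1} \cdot \frac{i(i+1)}{2},
\end{align}
which matches the claimed bound. The reason the absolute value appears here (unlike the monotone observational metrics) is that a non-optimal, essential-graph-driven algorithm $A$ may perform differently when $v_{i+1}$ is added, so $Y(r,A)_i$ need not be monotone; hence I cannot use the clean monotonicity-plus-coupling argument and must fall back on the crude $\binom{i+1}{2}$ worst-case swing.

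The main obstacle I anticipate is making the downstream-independence reduction fully rigorous at the level of a single coupled sample: I need to argue that whenever $v_{i+1}$ is not attached to any undirected edge in $\mathrm{Ess}(G_{i+1}, \cdot)$, the algorithm $A$ selects the same intervention targets on $G_{i+1}$ as on $G_i$ and that every edge among $v_1,\dots,v_i$ ends up with the same orientation status in both runs. This requires invoking Lemma \ref{lem-subgraph-ignore} to peel off $v_{i+1}$ (and any descendants it drags into inertness) together with the structural Meek-rule stability used in Lemma \ref{orderRUndirected}, so that orientations elsewhere are untouched. Once that pointwise equality on the good event is established, bounding the probability of the bad event by $\mathrm{RHS}(\rho,i)$ reuses the orderDAG tail estimate already proved, and the remaining factor $i(i+1)/2$ is just the total edge count, so the rest is a one-line product.
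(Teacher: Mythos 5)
Your proposal is correct and follows essentially the same route as the paper's proof: condition on whether $v_{i+1}$ is incident to an undirected edge in the essential graph, use downstream-independence to conclude $Y(r,A)_{i+1} = Y(r,A)_i$ on the complement of that event, bound the event's probability by $\mathrm{RHS}(\rho,i)$ via the argument of Lemma \ref{lazyXbound}, and charge the trivial worst-case swing $i(i+1)/2$ on the bad event. Your added remarks on making the peel-off of $v_{i+1}$ rigorous via Lemma \ref{lem-subgraph-ignore}, and on why an absolute value (rather than monotonicity) is needed here, are consistent with and slightly more careful than the paper's own write-up.
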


\textbf{Remark:}  Suppose there is an algorithm $A$ that optimizes some score function based on the essential graphs alone which is a proxy for minimizing the number of expected unoriented edges after $r$ interventions, then such algorithms are likely to be making decisions independent of $G'$ in general due to Lemma \ref{lem-subgraph-ignore}. An example is the algorithm that greedily picks the intervention that reduces the expected number of unoriented edges where the expectation is over the uniform distribution of DAGs compatible with the essential graph.



\begin{theorem}\label{algoasymp}
Let $A$ be an algorithm that is downstream independent and chooses interventions based on $\mathrm{ess}(G)$. Let $Y(r,A)_n$ be the number of undirected edges after $r$ interventions made by the algorithm $A$. Then, 
\begin{align*}
 \E[Y(r,A)_n] &\leq \E[Y(r,A)_{\infty}] \nonumber \\
 \hfill &\leq \E[Y(r,A)_n] + \nonumber \\
 \hfil & \quad\sum_{i=n}^{\infty} \rho i^2(i+1)/2 * (1-\rho(1-\rho))^{i-1}.
\end{align*}
Here, $\lim \limits_{n \rightarrow \infty} Y(r,A)_n = Y(r,A)_{\infty}$ and this limit exists.
\end{theorem}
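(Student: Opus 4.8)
The plan is to mirror the architecture used for Theorem~\ref{asymp}: obtain the lower bound $\E[Y(r,A)_n] \le \E[Y(r,A)_\infty]$ together with the existence of the limit from an almost-sure monotonicity statement under the coupling, and obtain the upper bound by telescoping the per-step gap bound already supplied by Theorem~\ref{downstreamConvergence}. Note first that the series $\sum_{i\ge n}\rho i^2(i+1)/2\,(1-\rho(1-\rho))^{i-1}$ converges, since $1-\rho(1-\rho)\in[3/4,1)$ for $\rho\in(0,1)$ and the exponential factor dominates the polynomial prefactor; hence its tail tends to $0$, which is what makes the right-hand gap meaningful and also forces the increments $\E[Y(r,A)_{i+1}]-\E[Y(r,A)_i]$ to be summable.

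The core step I would carry out is the monotonicity claim $Y(r,A)_{n+1}\ge Y(r,A)_n$ almost surely under the coupling of Section~\ref{sec:coupling}, where $G_{n+1}$ is $G_n$ together with the sink $v_{n+1}$. The argument runs through the \emph{downstream-independence} of $A$. When $v_{n+1}$ carries no undirected edge in $\mathrm{ess}(G_{n+1})$, it (having no descendants) lies in the pruned set $G'$ of the definition, so by Lemma~\ref{lem-subgraph-ignore} the interventions $A$ chooses on $G_{n+1}$ coincide with those it chooses on $G_{n+1}\setminus v_{n+1}$; moreover, by Lemma~\ref{orderUndirected} together with orientation preservation, the essential graph of $G_{n+1}$ restricted to $\{v_1,\dots,v_n\}$ equals $\mathrm{ess}(G_n)$, so $A$ performs the \emph{same} set of interventions on both graphs. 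I would then apply Lemma~\ref{orderRUndirected}, combined with the observation that declining to intervene on $v_{n+1}$ can only leave \emph{more} edges unorientable, to conclude that every edge left undirected in $G_n$ after these interventions remains undirected in $G_{n+1}$, while the edges incident to $v_{n+1}$ only add to the count. This yields $Y(r,A)_{n+1}\ge Y(r,A)_n$ a.s., hence $\E[Y(r,A)_n]$ is non-decreasing, which is exactly the lower bound; together with the summable-increment boundedness it lets me invoke monotone convergence (as in Theorem~\ref{limexists}) to conclude that $Y(r,A)_\infty=\lim_n Y(r,A)_n$ exists a.s.\ and that $\E[Y(r,A)_\infty]=\lim_n\E[Y(r,A)_n]$.

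For the upper bound I would telescope: $\E[Y(r,A)_\infty]-\E[Y(r,A)_n]=\sum_{i\ge n}\bigl(\E[Y(r,A)_{i+1}]-\E[Y(r,A)_i]\bigr)\le \sum_{i\ge n}\bigl|\E[Y(r,A)_{i+1}]-\E[Y(r,A)_i]\bigr|$, and bound each term by Theorem~\ref{downstreamConvergence} to recover the stated sum. The hard part will be the monotonicity step in the remaining case, where $v_{n+1}$ \emph{retains} an undirected edge: then $v_{n+1}$, and cascading upward the ancestors of its undirected neighbour, survive the pruning, so $G_{n+1}\setminus G'_{n+1}$ is strictly larger than $G_n\setminus G'_n$ and $A$ may legitimately select different interventions on the two graphs, breaking the clean identification above. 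Controlling this case---showing the undirected-edge count still cannot decrease, and quantifying how often and how much the algorithm's choices can diverge---is precisely where the extra combinatorial factor $i(i+1)/2$ (a sum over candidate edge pairs) in Theorem~\ref{downstreamConvergence} originates, and it is the step I expect to require the most care.
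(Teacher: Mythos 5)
Your outline does match the paper's (very terse) proof: the paper also obtains this theorem by combining Theorem~\ref{downstreamConvergence} with ``analogous arguments regarding monotonicity and existence of limits.'' Moreover, your good case is handled exactly as in the paper's proof of Theorem~\ref{downstreamConvergence}: when $v_{n+1}$ has no undirected edge in $\mathrm{ess}(G_{n+1})$, it is pruned, downstream-independence forces identical intervention choices on $G_{n+1}$ and $G_n$, and Lemmas~\ref{orderUndirected} and \ref{oneVertexAlg} then give $Y(r,A)_{n+1}=Y(r,A)_n$ (in fact equality, not just $\geq$) on that event.

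The genuine gap is in the bad case, which you have located but misdiagnosed. The factor $i(i+1)/2$ in Theorem~\ref{downstreamConvergence} does \emph{not} arise from ``showing the undirected-edge count still cannot decrease''; it is simply the trivial bound $\binom{i+1}{2}$ on the total number of edges, used to bound the \emph{magnitude} of the conditional difference on the bad event, with no sign information whatsoever---which is precisely why that theorem is stated with an absolute value. Indeed, on the event that $v_{n+1}$ retains an undirected edge, $v_{n+1}$ is not pruned, so downstream-independence imposes no relation between $A$'s choices on $\mathrm{ess}(G_{n+1})$ and on $\mathrm{ess}(G_n)$; the algorithm may legitimately select different interventions on $G_{n+1}$ that orient strictly more edges, so $Y(r,A)_{n+1}<Y(r,A)_n$ can occur with positive probability and almost-sure monotonicity under the coupling fails for a general downstream-independent $A$. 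Consequently monotone convergence is unavailable. What telescoping Theorem~\ref{downstreamConvergence} actually delivers is absolute summability of the increments $\E[Y(r,A)_{i+1}]-\E[Y(r,A)_i]$, hence $\E[Y(r,A)_n]$ is Cauchy (so the limit of expectations exists) together with the two-sided bound $\lvert \E[Y(r,A)_{\infty}]-\E[Y(r,A)_n]\rvert \leq \sum_{i\geq n}\rho i^2(i+1)/2\,(1-\rho(1-\rho))^{i-1}$. This proves the stated upper bound, but on the lower side it only gives $\E[Y(r,A)_n]-\epsilon'_n \leq \E[Y(r,A)_{\infty}]$; the clean lower bound $\E[Y(r,A)_n]\leq \E[Y(r,A)_{\infty}]$ requires exactly the bad-case monotonicity you deferred, and that step does not go through in the stated generality (it holds with equality on the good event but is unjustified, indeed false for adversarial $A$, on the bad event). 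This is a weakness your proposal shares with the paper's own appeal to ``analogous monotonicity arguments,'' but your plan of proving a.s.\ monotonicity and invoking monotone convergence is a route that would fail, whereas the probability-times-worst-case-jump argument is the one that can be completed.
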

\begin{proof}
This is a direct corollary from the previous results in this section together with analogous arguments regarding monotonicity and existence of limits similar to those for $X_n(r)$.
\end{proof}

\begin{figure*}[htbp]
    \subfloat[Average number of unoriented edges, $X_n(r)$, in the essential graph associated with orderDAGs of density $\rho$ after $r$ interventions, averaged over $2000$ samples; the highlighted region corresponds to points within 2-standard deviations from the mean. \label{fig1}]
    {\includegraphics[width=8cm]{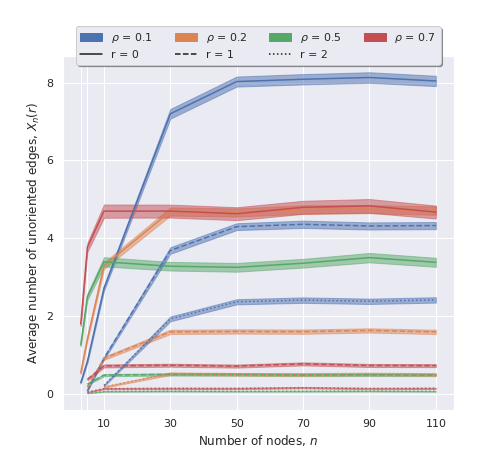}}
    \hfill
    \subfloat[Average logarithm of the size of the I-MEC for order DAGs of density $\rho$ after $r$ interventions, averaged over $2000$ samples; the highlighted region corresponds to points within 2-standard deviations from the mean. \label{fig3}]{\includegraphics[width=8cm]{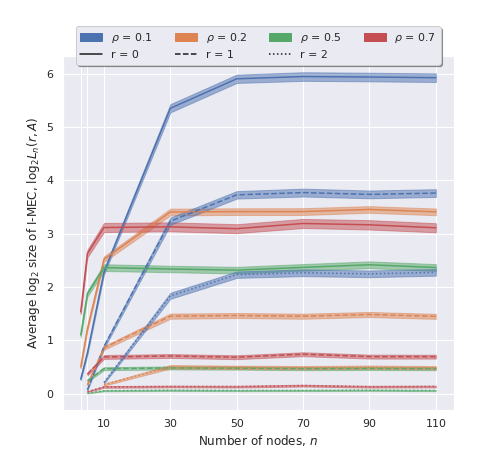}}

    \caption{We plot Monte-Carlo estimates of $\E[Y_n(r,A)]$, i.e~ the number of unoriented edges in the essential graph of a random order DAG after $r$ interventions, together with $\E[\log_{2} L_n(r,A)]$, i.e. the size of the $I$-MEC after $r$ interventions.
    }
\end{figure*}

\section{Discussion of the Results}
Theorems \ref{asymp} and \ref{algoasymp} provide upper bounds in terms of quantities computable by Monte-Carlo simulation at finite $n$ from random order DAGs and constants such as $\epsilon_n$ that are exponentially small in $n$. If empirical means of these finite $n$ quantities appearing in these upper bounds can be characterized with very high precision, then we can characterize the constant by which these asymptotic quantities are upper bounded.

In the following section, we plot the empirical means of these finite $n$ quantities or upper bounds to these finite $n$ quantities for very large $n$ and show that when combined with the above bounds, the asymptotic quantities tend to a constant.

\subsection{Precise Calculation of High Confidence Upper Bounds on Asymptotic $\log$-MEC Size for Random Order DAGs of Density $\rho=0.5$ }

We demonstrate how to obtain confidence intervals on the expected asymptotic mean $E[X_\infty]$ and $E[\log_2 (L_\infty)]$ using our bounds and Monte Carlo simulations.

\textbf{Details of the Numerical Experiment:} We sampled $X_{30}$ $S=100000$ times for random order DAGs with $\rho=0.5$. The sample variance we observed was $V=7.054$ while the empirical mean was $M=3.394$.

We use an empirical Bernstein bound for $E[X_{30}]$ and show the following bound on expected value of $X_{\infty}$: 
\begin{theorem}
\label{thm-upp-bounds}
 With probability at least $0.99$ over the randomness in our numerical experiments over $S=100000$ samples, we have: 
  $E[\log_2 (L_{\infty})] \leq E[X_{\infty}] \leq 3.497 $.
\end{theorem}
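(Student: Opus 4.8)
The plan is to confine all of the randomness in the statement to a single concentration estimate and to obtain everything else as a deterministic chain supplied by the earlier theorems. Concretely, I would write $E[\log_2(L_\infty)] \le E[X_\infty] \le E[X_{30}] + \epsilon_{30}$ as two deterministic inequalities, and then only the passage from $E[X_{30}]$ to the numerical value $3.497$ requires a high-probability bound on the Monte Carlo mean.

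First I would dispose of the left inequality. It is exactly the $r=0$ instance of the third line of Theorem \ref{asymp}, since $L_\infty(0)=L_\infty$ and $X_\infty(0)=X_\infty$, and it holds with probability $1$; so it suffices to prove $E[X_\infty]\le 3.497$ with probability at least $0.99$. Next I would invoke the gap bound: instantiating the first line of Theorem \ref{asymp} (equivalently Theorem \ref{lazyEXbound}) at $r=0$, $n=30$ gives $E[X_\infty]\le E[X_{30}]+\epsilon_{30}$, where by \eqref{eqn:eps} with $\rho=0.5$ we have $1-\rho(1-\rho)=3/4$ and
\[
\epsilon_{30}\le \frac{(3/4)^{30}}{\tfrac12\cdot(\tfrac12)^2} + 30\,\frac{(3/4)^{29}}{\tfrac12}.
\]
Both summands are explicit; using $(3/4)^{30}\approx 1.79\times10^{-4}$ shows $\epsilon_{30}$ is a small constant of order $10^{-2}$ (about $0.016$), comfortably inside the available budget of $3.497-3.394\approx 0.10$.

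The only probabilistic step is converting the reported empirical mean $M=3.394$ and sample variance $V=7.054$ over $S=100000$ samples into a high-confidence upper bound on $E[X_{30}]$ via the Maurer--Pontil empirical Bernstein inequality. Since $X_{30}$ counts unoriented edges it lies in $[0,R]$ with the a priori range $R=\binom{30}{2}=435$; rescaling to $[0,1]$ and applying the bound with $\delta=0.01$ yields, with probability at least $0.99$,
\[
E[X_{30}] \le M + \sqrt{\frac{2V\ln(2/\delta)}{S}} + \frac{7R\ln(2/\delta)}{3(S-1)}.
\]
With $\ln(2/\delta)=\ln 200\approx 5.30$ the variance term is $\approx 0.027$ and the range term is $\approx 0.054$. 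Chaining the three displays gives $E[X_\infty]\le M + 0.027 + 0.054 + \epsilon_{30}\le 3.497$ with probability at least $0.99$, which is the claim.

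The hard part will be the tightness of the concentration step rather than any conceptual difficulty: because $X_{30}$ has a large a priori range ($R=435$) while its realized values are small, the $O(R/S)$ bias term of the empirical Bernstein bound dominates the slack, and the argument only closes because $3.497-M$ is large enough to absorb it together with the variance and gap terms. I would therefore verify carefully that $R=435$ is a legitimate deterministic range bound (as opposed to the smaller but random edge count, which cannot be used in a Bernstein range), and confirm the arithmetic so that the three contributions — variance $\approx 0.027$, range $\approx 0.054$, and $\epsilon_{30}\approx 0.016$ — indeed sum to at most $3.497-3.394$.
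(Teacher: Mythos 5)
Your proposal is correct and follows essentially the same route as the paper's own proof: the deterministic chain $E[\log_2(L_\infty)] \leq E[X_\infty] \leq E[X_{30}] + \epsilon_{30}$ from Theorem~\ref{asymp}, a direct evaluation of $\epsilon_{30}$ via~\eqref{eqn:eps} (the paper bounds it by $0.02$), and the Maurer--Pontil empirical Bernstein inequality with $M=3.394$, $V=7.054$, $\delta = 0.01$ applied to the Monte Carlo samples. The only cosmetic difference is the range constant: you use the exact $\binom{30}{2}=435$ while the paper uses the cruder $B=450$; both close the arithmetic below $3.497$.
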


This is an illustration of how our upper bounds, emprirical Bernstein bounds and Monte Carlo simulation can be combined to give highly precise guarantees for all the considered metrics.

\section{Numerical Results}\label{numerical-sec}

We compute and plot the empirical means of the following observational metrics: a) $X_n$, b) $isuEss_n$, c) $I_n$, and d) $\log_{2} L_n$. We also plot the empirical mean of the following interventional metrics a) $Y(r,A)_n$,  b) $isuEss(r,A)_{n}$, c) $\log_{2} L(r,A)_{n}$, and d) $I(r,A)_n$. These interventional metrics are obtained on the essential graph $\mathrm{Ess}(G_n,A)$ obtained by the  greedy algorithm $A$ that operates as follows:  First pick the node $I_1$ that orients the most edges, then for each consecutive $r$, pick $I_r$ that orients the most edges in $G_n$ given the ($\{ I_1, \ldots, I_{r-1} \}$-)essential graph.

\begin{figure*}[htbp]
    \subfloat[Probability that the essential graph associated with an order DAG of density $\rho$ can be uniquely identified after $r$ interventions, averaged over $2000$ samples; the highlighted region corresponds to points within 2-standard deviations from the mean. \label{fig2}]{\includegraphics[width=8cm]{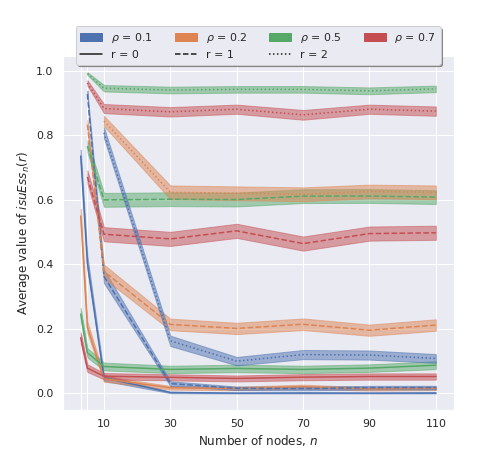}}
    \hfill
    \subfloat[Empirical mean of the number of interventions needed to fully identify a random order DAG of density $\rho$, averaged over $2000$ samples; the highlighted region corresponds to points within 2-standard deviations from the mean.  \label{fig4}]{\includegraphics[width=8cm]{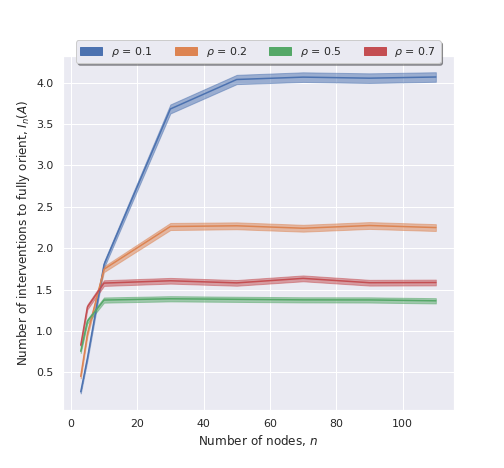}}
    
    \caption{We plot Monte-Carlo estimates of $\mathbb{P}(isuEss_n(r,A))$, i.e.~the probability that the essential graph of a random order DAG is equal to the order DAG itself, together with $\E[I_n(r,A)]$, i.e.~the number of single-node interventions required to fully orient a random order DAG.}
\end{figure*}

\textbf{Graph Generation:}
We generated 2,000 random order DAGs with $n = \{3, 5, 10, 30, \ldots 110 \}$ nodes and densities $\rho = \{ .1, .2, .5, .7\}$. For each DAG, we used the open-source \texttt{causaldag} package in Python to compute the number of DAGs in the ($\mathcal{I}$-)MEC and the number of undirected edges in the ($\mathcal{I}$-)essential graph obtained by applying algorithm $A$ on $G_n$.

\textbf{Results Established:}
The plots serve two purposes - a) The empirical mean plots (Figs. \ref{fig1}-\ref{fig4}) and the box plots (Figs. \ref{fig7}-\ref{fig12})  of all the estimated quantities provide an idea of what values the asymptotic quantities are bounded by given the formula for $\epsilon_n$ in Theorem \ref{asymp}. For a more refined high confidence upper bound, for large enough $n$, analysis similar to Theorem~\ref{thm-upp-bounds} can be done. b) They help corroborate the monotonicity results we have derived analytically.

\textbf{Bounding Interventional Metrics:} We observe that the above interventional metrics plotted provide an upper bound to $X_n(r),L_n(r),isuEss_n(r)$ and $I_n(r)$ which are based on the set of optimal interventions for $G_n$ that minimize the number of unoreinted edges given $G_n$. Therefore, by Theorem \ref{asymp} they certainly provide valid upper bounds together with $\epsilon_n$. The shaded regions in each plot are the estimates of the 95\% confidence intervals as given by the \texttt{scipy.stats} function \texttt{bayes\_mvs}.

Figure \ref{fig1} plots empirical mean of $X_n$ and $Y(r,A)_n$. We observe that $\bar{X}_n$ increases sharply for $\rho \geq 0.5$ and plateaus near $n = 10$, while $\bar{X}_n$ increases more gradually for $\rho < 0.5$, with a higher limit for sparse graphs. For all densities, the empirical mean of $Y(r,A)_n$ increases more gradually than the observational $\bar{X}_n$.

Figure \ref{fig3} plots empirical mean of $\log L_n$ and $\log L(r,A)_n$. We again observe sharper increases and lower plateaus for the higher densities, $\rho = 0.5$ and $\rho = 0.7$, compared to more gradual rises and higher plateaus for the lower densities. Whereas in Figure~\ref{fig1}, $\bar{X}_n$ stabilizes at similar values for $\rho = 0.2$ and $\rho = 0.5$, in Figure~\ref{fig3}, the empirical mean of $\log L_n$ is greater for $\rho = 0.2$ than for $\rho = 0.5$. This indicates that each unoriented edge contributes to more MECs when the density is low.

Figure \ref{fig2} demonstrates the monotonicity of the empirical mean of $isuEss_n$ and $isuEss(r,A)_n$. We observe that the empirical mean of $isuEss_n$ drops sharply for all densities, with $\rho = 0.5$ appearing to have the highest limit. The difference in behavior of the empirical mean of $isuEss(1,A)_n$ and $isuEss(2,A)_n$ for different densities is noteworthy. For sparser graphs, 1 or 2 interventions do not significantly increase the expected ability to identify the DAG; for instance, when $\rho = 0.1$, the expected number of fully identified DAGs barely changes from the observational case after $n = 30$. However, for denser graphs, such as for $\rho = 0.5$ and $\rho = 0.7$, even 1 intervention is sufficient to learn roughly 50\% and 60\% of the sampled graphs, respectively, and 2 interventions is sufficient to learn nearly all of them, even when $n = 110$. This result can be explained by the fact that sparse graphs often consist of multiple connected components and interventions in one component have no effect on other components. Finally, Figure~\ref{fig4} demonstrates the monotonicity of the empirical mean of $I_n$. Surprisingly, it takes very few interventions to orient even~large,~sparse~graphs.


\section{Conclusion}
We provided sharp upper and lower bounds for asymptotic expected $\log$-MEC size and the number of interventions needed to fully orient a random order DAG after $r=0,1,2..$ (constant) number of initial interventions. There are various other metrics associated with $I$-MECs of random order DAGs that we precisely quantify in this work. Our methods relied on analytical bounds on the asymptotic quantities based on coupling arguments and exploiting the properties of Meek rules. This together with Monte Carlo simulations at finite sizes establishes quantifiable and precise bounds. 
 
Our results mean that a walk over the space of graphs (larger search space but simpler moves) would not be more time consuming than a walk over the space of Markov equivalence classes (more complicated moves) when implementing greedy search for structure learning. This is because the asymptotic log MEC size goes to a constant for dense graphs. 
 In addition, our results imply that in general relatively few interventions are needed to identifying dense causal networks. Investigations like this for random graphs considering various levels of sparsity and relaxing the causal sufficiency assumptions are interesting directions for future work.

\section*{Acknowledgements}
\vspace{-0.4cm}
C.~Uhler was partially supported by NSF (DMS-1651995), ONR (N00014-17-1-2147 and N00014-18-1-2765), IBM, and a Sloan Fellowship.

\bibliographystyle{abbrvnat}
\bibliography{refs-order.bib}

\begin{thebibliography}{31}
\providecommand{\natexlab}[1]{#1}
\providecommand{\url}[1]{\texttt{#1}}
\expandafter\ifx\csname urlstyle\endcsname\relax
  \providecommand{\doi}[1]{doi: #1}\else
  \providecommand{\doi}{doi: \begingroup \urlstyle{rm}\Url}\fi

\bibitem[Bello and Honorio(2017)]{bello2017learning}
K.~Bello and J.~Honorio.
\newblock Learning causal {Bayes} networks using interventional path queries in
  polynomial time and sample complexity.
\newblock \emph{arXiv preprint arXiv:1706.00754}, 2017.

\bibitem[Brenner and Sontag(2013)]{brenner2013sparsityboost}
E.~Brenner and D.~Sontag.
\newblock Sparsityboost: A new scoring function for learning {Bayesian} network
  structure.
\newblock \emph{arXiv preprint arXiv:1309.6820}, 2013.

\bibitem[Castelo and Kocka(2003)]{castelo2003inclusion}
R.~Castelo and T.~Kocka.
\newblock On inclusion-driven learning of {Bayesian} networks.
\newblock \emph{Journal of Machine Learning Research}, 4\penalty0
  (Sep):\penalty0 527--574, 2003.

\bibitem[Eberhardt et~al.(2012)Eberhardt, Glymour, and
  Scheines]{eberhardt2012number}
F.~Eberhardt, C.~Glymour, and R.~Scheines.
\newblock On the number of experiments sufficient and in the worst case
  necessary to identify all causal relations among n variables.
\newblock \emph{arXiv preprint arXiv:1207.1389}, 2012.

\bibitem[Gangl(2010)]{gangl2010causal}
M.~Gangl.
\newblock Causal inference in sociological research.
\newblock \emph{Annual review of sociology}, 36:\penalty0 21--47, 2010.

\bibitem[Gillispie(2006)]{gillispie2006formulas}
S.~B. Gillispie.
\newblock Formulas for counting acyclic digraph {Markov} equivalence classes.
\newblock \emph{Journal of Statistical Planning and Inference}, 136\penalty0
  (4):\penalty0 1410--1432, 2006.

\bibitem[Gillispie and Perlman(2001)]{gillispie2001enumerating}
S.~B. Gillispie and M.~D. Perlman.
\newblock Enumerating {Markov} equivalence classes of acyclic digraph models.
\newblock In \emph{Proceedings of the Seventeenth Conference on Uncertainty in
  Artificial Intelligence}, pages 171--177. Morgan Kaufmann Publishers Inc.,
  2001.

\bibitem[Hauser and
  B{\"u}hlmann(2012{\natexlab{a}})]{hauser2012characterization}
A.~Hauser and P.~B{\"u}hlmann.
\newblock Characterization and greedy learning of interventional {Markov}
  equivalence classes of directed acyclic graphs.
\newblock \emph{Journal of Machine Learning Research}, 13\penalty0
  (Aug):\penalty0 2409--2464, 2012{\natexlab{a}}.

\bibitem[Hauser and B{\"u}hlmann(2012{\natexlab{b}})]{hauser2012two}
A.~Hauser and P.~B{\"u}hlmann.
\newblock Two optimal strategies for active learning of causal networks from
  interventional data.
\newblock In \emph{Proceedings of Sixth European Workshop on Probabilistic
  Graphical Models}, volume 119, page~5, 2012{\natexlab{b}}.

\bibitem[He and Yu(2016)]{he2016formulas}
Y.~He and B.~Yu.
\newblock Formulas for counting the sizes of {Markov} equivalence classes of
  directed acyclic graphs.
\newblock \emph{arXiv preprint arXiv:1610.07921}, 2016.

\bibitem[Hu et~al.(2014)Hu, Li, and Vetta]{hu2014randomized}
H.~Hu, Z.~Li, and A.~R. Vetta.
\newblock Randomized experimental design for causal graph discovery.
\newblock In \emph{Advances in Neural Information Processing Systems}, pages
  2339--2347, 2014.

\bibitem[Hyttinen et~al.(2013)Hyttinen, Eberhardt, and
  Hoyer]{hyttinen2013experiment}
A.~Hyttinen, F.~Eberhardt, and P.~O. Hoyer.
\newblock Experiment selection for causal discovery.
\newblock \emph{The Journal of Machine Learning Research}, 14\penalty0
  (1):\penalty0 3041--3071, 2013.

\bibitem[Kalisch and B{\"u}hlmann(2007)]{kalisch2007estimating}
M.~Kalisch and P.~B{\"u}hlmann.
\newblock Estimating high-dimensional directed acyclic graphs with the
  {PC}-algorithm.
\newblock \emph{Journal of Machine Learning Research}, 8\penalty0
  (Mar):\penalty0 613--636, 2007.

\bibitem[Kocaoglu et~al.(2017)Kocaoglu, Dimakis, and
  Vishwanath]{kocaoglu2017cost}
M.~Kocaoglu, A.~G. Dimakis, and S.~Vishwanath.
\newblock Cost-optimal learning of causal graphs.
\newblock \emph{arXiv preprint arXiv:1703.02645}, 2017.

\bibitem[Lagani et~al.(2016)Lagani, Triantafillou, Ball, Tegner, and
  Tsamardinos]{lagani2016probabilistic}
V.~Lagani, S.~Triantafillou, G.~Ball, J.~Tegner, and I.~Tsamardinos.
\newblock Probabilistic computational causal discovery for systems biology.
\newblock In \emph{Uncertainty in Biology}, pages 33--73. Springer, 2016.

\bibitem[Maurer and Pontil(2009)]{maurer2009empirical}
A.~Maurer and M.~Pontil.
\newblock Empirical bernstein bounds and sample variance penalization.
\newblock \emph{arXiv preprint arXiv:0907.3740}, 2009.

\bibitem[Meek(1995)]{meek1995causal}
C.~Meek.
\newblock Causal inference and causal explanation with background knowledge.
\newblock In \emph{Proceedings of the Eleventh conference on Uncertainty in
  artificial intelligence}, pages 403--410. Morgan Kaufmann Publishers Inc.,
  1995.

\bibitem[Meek(1997)]{Meek_GES}
C.~Meek.
\newblock \emph{Graphical Models: Selecting causal and statistical models}.
\newblock PhD thesis, PhD thesis, Carnegie Mellon University, 1997.

\bibitem[Mohammadi et~al.(2018)Mohammadi, Uhler, Wang, and Yu]{Mohammadi}
F.~Mohammadi, C.~Uhler, C.~Wang, and J.~Yu.
\newblock Generalized permutohedra from probabilistic graphical models.
\newblock \emph{SIAM Journal on Discrete Mathematics}, 32:\penalty0 64--93,
  2018.

\bibitem[Radhakrishnan et~al.(2017)Radhakrishnan, Solus, and
  Uhler]{radhakrishnan2016counting}
A.~Radhakrishnan, L.~Solus, and C.~Uhler.
\newblock Counting {Markov} equivalence classes by number of immoralities.
\newblock \emph{Proceedings of the Thirty-Third Conference on Uncertainty in
  Artificial Intelligence}, 2017.

\bibitem[Radhakrishnan et~al.(2018)Radhakrishnan, Solus, and
  Uhler]{radhakrishnan2018counting}
A.~Radhakrishnan, L.~Solus, and C.~Uhler.
\newblock Counting {Markov} equivalence classes for dag models on trees.
\newblock \emph{Discrete Applied Mathematics}, 244:\penalty0 170--185, 2018.

\bibitem[Raskutti and Uhler(2018)]{raskutti2013learning}
G.~Raskutti and C.~Uhler.
\newblock Learning directed acyclic graphs based on sparsest permutations.
\newblock \emph{Stat}, 7:\penalty0 e183, 2018.

\bibitem[Shanmugam et~al.(2015)Shanmugam, Kocaoglu, Dimakis, and
  Vishwanath]{shanmugam2015learning}
K.~Shanmugam, M.~Kocaoglu, A.~G. Dimakis, and S.~Vishwanath.
\newblock Learning causal graphs with small interventions.
\newblock In \emph{Advances in Neural Information Processing Systems}, pages
  3195--3203, 2015.

\bibitem[Solus et~al.(2017)Solus, Wang, Matejovicova, and Uhler]{Solus}
L.~Solus, Y.~Wang, L.~Matejovicova, and C.~Uhler.
\newblock Consistency guarantees for permutation-based causal inference
  algorithms.
\newblock \emph{arXiv preprint arXiv:1702.03530}, 2017.

\bibitem[Spirtes et~al.(2000)Spirtes, Glymour, Scheines, Heckerman, Meek,
  Cooper, and Richardson]{Spirtes}
P.~Spirtes, C.~N. Glymour, R.~Scheines, D.~Heckerman, C.~Meek, G.~Cooper, and
  T.~Richardson.
\newblock \emph{Causation, prediction, and search}.
\newblock MIT press, 2000.

\bibitem[Steinsky(2003)]{steinsky2003enumeration}
B.~Steinsky.
\newblock Enumeration of labelled chain graphs and labelled essential directed
  acyclic graphs.
\newblock \emph{Discrete Mathematics}, 270\penalty0 (1-3):\penalty0 267--278,
  2003.

\bibitem[Teyssier and Koller(2012)]{teyssier2012ordering}
M.~Teyssier and D.~Koller.
\newblock Ordering-based search: A simple and effective algorithm for learning
  {Bayesian} networks.
\newblock \emph{arXiv preprint arXiv:1207.1429}, 2012.

\bibitem[Wagner(2013)]{wagner2013asymptotic}
S.~Wagner.
\newblock Asymptotic enumeration of extensional acyclic digraphs.
\newblock \emph{Algorithmica}, 66\penalty0 (4):\penalty0 829--847, 2013.

\bibitem[Wang et~al.(2017)Wang, Solus, Yang, and Uhler]{wang2017permutation}
Y.~Wang, L.~Solus, K.~Yang, and C.~Uhler.
\newblock Permutation-based causal inference algorithms with interventions.
\newblock In \emph{Advances in Neural Information Processing Systems}, pages
  5822--5831, 2017.

\bibitem[Xiao et~al.(2015)Xiao, Gong, Lv, Lan, Hu, Li, Xu, Bai, Deng, Liu,
  et~al.]{xiao2015gene}
Y.~Xiao, Y.~Gong, Y.~Lv, Y.~Lan, J.~Hu, F.~Li, J.~Xu, J.~Bai, Y.~Deng, L.~Liu,
  et~al.
\newblock Gene perturbation atlas ({GPA}): a single-gene perturbation
  repository for characterizing functional mechanisms of coding and non-coding
  genes.
\newblock \emph{Scientific Reports}, 5:\penalty0 10889, 2015.

\bibitem[Yang et~al.(2018)Yang, Katcoff, and Uhler]{Yang2018}
K.~D. Yang, A.~Katcoff, and C.~Uhler.
\newblock Characterizing and learning equivalence classes of causal {DAGs}
  under interventions.
\newblock \emph{Proceedings of Machine Learning Research}, 80:\penalty0
  5537--5546, 2018.

\end{thebibliography}

\newpage
\quad
\appendix

\section{Meek Orientation Rules}\label{meek}

In in Figure~\ref{fig5}, we provide the four Meek orientation rules that are used in the definition of the essential graph.
\begin{figure}[htbp]
 \centering
 \includegraphics[width=8.5cm]{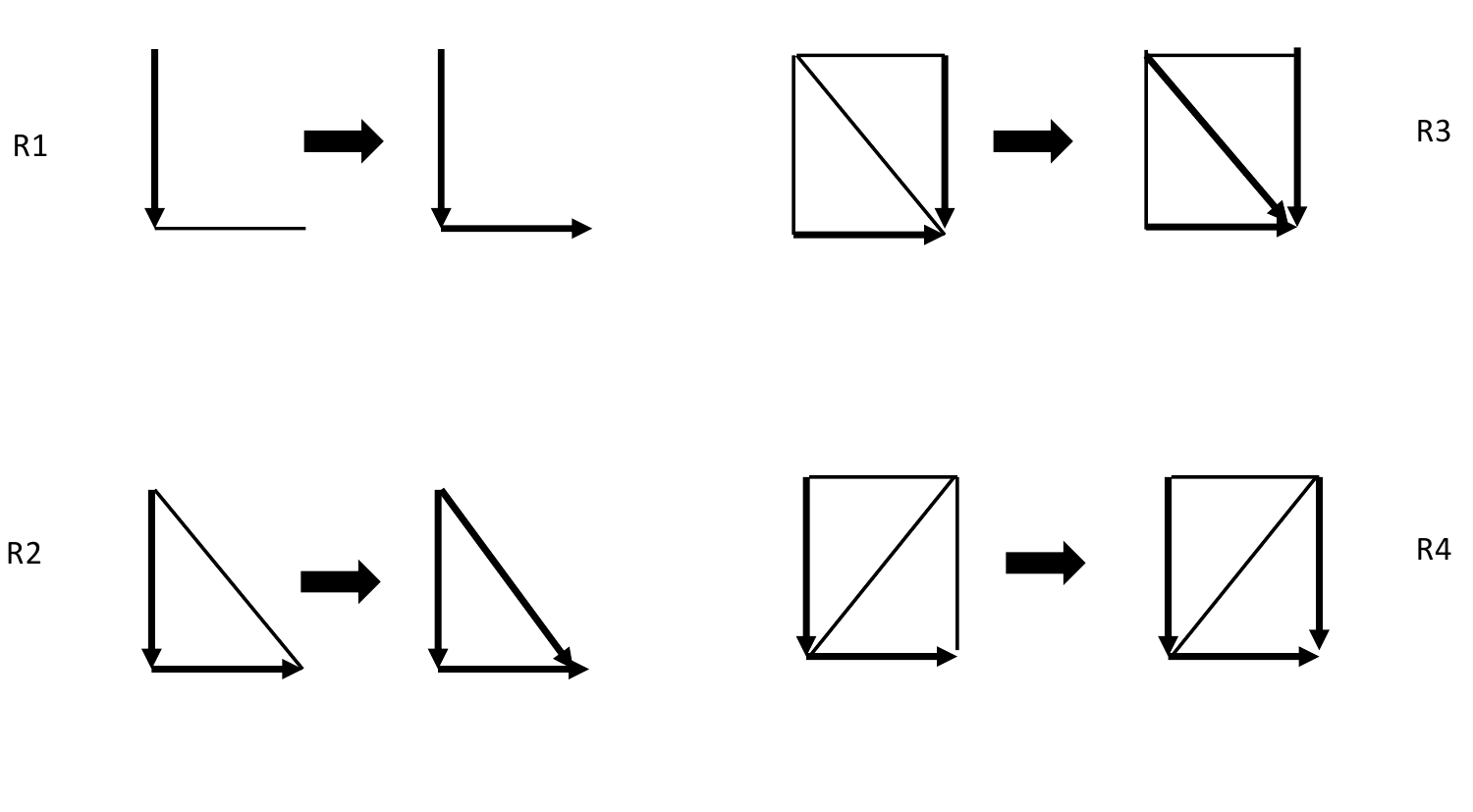}
 \caption{Meek orientation rules used to direct edges in the interventional essential graph representing the $I$-MEC.}
 \label{fig5}
\end{figure}

The following two observable properties play an important role in various results in the main paper. 

\begin{property}\label{prop1}
 If a node $v$ is involved in any of the four Meek rules and if the node $v$ does not have an outgoing edge in the original causal DAG, then the oriented edge (in the right hand side motif of  any of the four rules in Figure \ref{fig5}) is incident to $v$.
\end{property}

\begin{property}\label{prop2}
 If a node $v$ is involved in a motif for any of the four rules, then either $v$ has an  outgoing edge or it has an adjacent undirected edge (on the left hand side motif appearing in that rule).
\end{property}

\section{Additional Proofs}

\subsection{Proof of Lemma \ref{orderUndirected}}

Observe that all edges between $G_n$ and $v_{n+1}$ are directed to $v_{n+1}$ and that $v_{n+1}$ does not have any outgoing edges. Suppose that $v_{n+1}$ is involved in one of the four Meek rules in Appendix \ref{meek}. Then by Property \ref{prop1} in Appendix \ref{meek}, the discovered edge has to be incident to $v_{n+1}$. On the other hand, if $v_{n+1}$ is not part of any Meek rule, then the rules must have already been applied in $G_n$ to orient edges maximally, which completes the proof.

\subsection{Proof of Lemma \ref{orderRUndirected}}

The proof is similar to the proof of Lemma \ref{orderUndirected}, i.e. it follows from Property \ref{prop1} in Appendix \ref{meek}.



\subsection{Proof of Equation \ref{eqn:eps}}

We can simplify this sum as follows: 
\begin{align*} 
\sum_{i \geq n} i x^{i-1} &= \frac{d}{dx} (\sum_{i \geq n} x^i) \nonumber \\
 \hfill & = \frac{d}{dx} \frac{x^n}{1-x} \nonumber \\
 \hfill &= \frac{x^n}{(1-x)^2} + n x^{n-1} \frac{1}{1-x} \nonumber 
\end{align*}
Substituting $(1- \rho(1-\rho))$ for $x$, we obtain
\begin{align*}
\sum^\infty_{i = n} \rho i*(1 - \rho (1 - \rho))^{i-1}  &= \rho \left [ \frac{(1 - \rho (1 - \rho))^n}{\rho^2 (1-\rho)^2} +  \right. \nonumber \\
\hfill & \left. n\frac{(1 - \rho (1 - \rho))^{n-1}}{\rho (1-\rho)}  \right] \nonumber \\
\hfill &=  \frac{(1 - \rho (1 - \rho))^n}{\rho (1-\rho)^2} + \nonumber \\
\hfill & n\frac{(1 - \rho (1 - \rho))^{n-1}}{(1-\rho)}  
\end{align*}

\subsection{Proof of Lemma \ref{oneVertexAlg} }
Suppose $v_n \notin J$, then by Property \ref{prop2}, it cannot be part of any of the Meek rules in Appendix \ref{meek} . Therefore, it cannot aid in any of the rule applications after new edges have been discovered by interventions in $J$. This means that removing it before or after applying the Meek rules is irrelevant. Hence $J(G \backslash v_n) = J(G)\backslash v_n$. 
If $v_n \in J$, then the intervention on $v_n$ gives no additional information as all its adjacent edges have already been discovered and hence it is equivalent to using $J\backslash v_n$. Hence, this reduces to the  previous case with $J$ replaced by $J \backslash v_n$ and thereby completes the proof.

\subsection{Proof of Theorem \ref{ELmonotone}}

a) This follows directly from Lemma \ref{orderUndirected}. \\
b) Suppose the MEC of $G_n$ is given by the DAG set $\{H_1, H_2, \ldots, H_k\}$. For each $i$, let $H^\prime_i$ be the DAG $H_i$ extended by adding the vertex $v_{n+1}$, with the same incoming edges as it has in $G_{n+1}$. From Lemma \ref{orderUndirected}, it follows that the DAGs $\{H^\prime_1, H^\prime_2, \ldots, H^\prime_k\}$ are contained in the MEC of $G_{n+1}$. \\
c)  Due to the coupling, if a set of interventions orients $G_{n+1}$, it also orients $G_n$. The result follows.

The results for the expected values follow from the almost sure results.

\subsection{Proof of Theorem \ref{EXRmonotone}}
Let $R$ be the set of interventions that achieves the minimum number $X_{n+1}(r)$ of unoriented edges in $G_{n+1}$ and let $|R| = r$. We apply the same set of interventions to $G_n$ barring the possible intervention on node $v_{n+1}$. By Lemma \ref{orderRUndirected}, all edges unorientable in $G_{n}$ after these `copied' interventions are also unorientable in $G_{n+1}$ even with/without the possible additional intervention on $v_{n+1}$. Since we can (possibly) add the  extra intervention in $G_n$ to bring the total number to $r$, this means that after these r interventions on $G_n$ we have at most $X_{n+1}(r)$ unorientable edges in $G_n$, which completes the proof.

\subsection{Proof of Theorem \ref{ELRmonotone}}
Let $R$ be the set of optimal set of $r$ interventions on $G_{n+1}$ that orient the maximum number of edges in the $\mathrm{Ess}(G_{n+1},R)$. Therefore, $\lvert \mathrm{Ess}(G_{n+1},R) \rvert = L_{n+1}(r)$. Let $R'=R \backslash v_{n+1}$. Suppose the $\mathrm{Ess}(G_n,R')$ is given by the DAG set $\{H_1, H_2, \ldots, H_k\}$. For each $i$, let $H^\prime_i$ be the DAG $H_i$ extended by adding the vertex $v_{n+1}$, with the same incoming edges as in $G_{n+1}$. From Lemma \ref{orderRUndirected}, it follows that the DAGs $\{H^\prime_1, H^\prime_2, \ldots, H^\prime_k\}$ are contained in $\mathrm{Ess}(G_{n+1},R)$. This means, that, $L_{n}(r) \leq \lvert \mathrm{Ess}(G_n,R') \rvert \leq \lvert \mathrm{Ess}(G_{n+1},R) \rvert = L_{n+1}(r)  $

\subsection{Proof of Theorem \ref{limexists}}
 For all monotonic sequences $\{ x_n \}$, $$\lim \inf \{ x_n \} = \lim \sup \{ x_n \} = \lim \{ x_n \}.$$ 
 Further, when a sequence of measurable functions $ \{ f_n \}$ converges pointwise to a function $f$, then $f$ is also measurable. Here, $X_n$ is a measurable function of the random variables $G_n$. Hence, $\E[X_{\infty}]=\lim_{n \rightarrow \infty} \E[X_n]$ follows from the Lebesgue Monotone Convergence Theorem.
 
\begin{subsection}{Proof of Theorem \ref{lazyEXbound}}

We first prove the following Lemma. The theorem follows from the Lemma.

\begin{lemma}\label{lazyXbound}
$\E(X_{n+1}) - \E(X_n) \leq \rho n*(1 - \rho (1 - \rho))^{n-1}$.
\end{lemma}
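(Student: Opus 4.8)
The plan is to work entirely under the coupling of Section~\ref{sec:coupling}, where $G_{n+1}$ is obtained from $G_n$ by adjoining the vertex $v_{n+1}$ together with the edges $\{A_{i,n+1}\}_{i\in[n]}$, every one of which is oriented \emph{into} $v_{n+1}$, so that $v_{n+1}$ is a sink. The central structural fact I would establish first is that adjoining this sink leaves the orientation status of every edge of $G_n$ unchanged, i.e.\ the restriction of $\mathrm{Ess}(G_{n+1})$ to $\{v_1,\dots,v_n\}$ equals $\mathrm{Ess}(G_n)$. That unoriented edges of $\mathrm{Ess}(G_n)$ stay unoriented is precisely Lemma~\ref{orderUndirected}. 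For the converse direction—that every edge oriented in $\mathrm{Ess}(G_n)$ remains oriented in $\mathrm{Ess}(G_{n+1})$—I would note that the immoralities of $G_{n+1}$ among $v_1,\dots,v_n$ coincide with those of $G_n$ (adjacencies among these vertices are unchanged), and that the preconditions of the Meek rules for within-$G_n$ edges depend only on within-$G_n$ adjacencies and orientations; hence every orientation forced in $G_n$ is still forced in $G_{n+1}$. Since $v_{n+1}$ is a sink, Property~\ref{prop1} guarantees that any orientation step in which $v_{n+1}$ participates produces an edge incident to $v_{n+1}$, so the extra immoralities at $v_{n+1}$ generate no new orientations among $v_1,\dots,v_n$. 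Combining these facts, the only edges $X_{n+1}$ counts beyond $X_n$ are edges incident to $v_{n+1}$, giving the pointwise identity
\begin{equation*}
X_{n+1}-X_n \;=\; \bigl|\{\, i\in[n] : (v_i,v_{n+1}) \text{ present and unoriented in } \mathrm{Ess}(G_{n+1}) \,\}\bigr|.
\end{equation*}

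Next I would take expectations and apply linearity, reducing the problem to bounding, for each fixed $i\in[n]$, the probability that the edge $(v_i,v_{n+1})$ is present yet unoriented. The edge is present with probability $\rho$. For it to be unoriented, it is necessary that it not belong to any immorality at $v_{n+1}$: for every other vertex $v_j$ with $j\in[n]\setminus\{i\}$, we must not have simultaneously that $v_j$ is a parent of $v_{n+1}$ and that $v_i,v_j$ are non-adjacent, since otherwise $v_i\to v_{n+1}\leftarrow v_j$ would be an immorality that orients the edge. For a single $j$ this blocking event has probability $\rho(1-\rho)$ and depends only on the skeleton edges $\{j,n+1\}$ and $\{i,j\}$. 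Across the $n-1$ choices of $j$, together with the presence of $\{i,n+1\}$, all of these events involve pairwise distinct underlying edge variables and are therefore mutually independent, so the probability that none of them fires is $(1-\rho(1-\rho))^{n-1}$.

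Combining, for each $i$ the probability that $(v_i,v_{n+1})$ is present and unoriented is at most $\rho\,(1-\rho(1-\rho))^{n-1}$, and summing over the $n$ choices of $i$ yields
\begin{equation*}
\E(X_{n+1})-\E(X_n) \;\le\; \sum_{i=1}^{n}\rho\,(1-\rho(1-\rho))^{n-1} \;=\; \rho\,n\,(1-\rho(1-\rho))^{n-1},
\end{equation*}
as claimed. I expect the main obstacle to be the first paragraph: rigorously justifying that the within-$G_n$ orientations are frozen when the sink $v_{n+1}$ is added. Lemma~\ref{orderUndirected} supplies only the direction ``unoriented stays unoriented,'' whereas the upper bound crucially needs the complementary statement that no previously oriented within-$G_n$ edge becomes unoriented (equivalently, that the number of within-$G_n$ unoriented edges does not strictly grow); this is exactly where the seed-immorality comparison and the Property~\ref{prop1} ``sink'' argument do the real work. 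Once that equality is in hand, the probabilistic estimate is an elementary independence computation, for which I would only need to be careful that the immorality-blocking events for distinct $j$ genuinely involve disjoint sets of edge variables.
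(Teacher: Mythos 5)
Your proof is correct and follows essentially the same route as the paper's: reduce $\E(X_{n+1})-\E(X_n)$ to the expected number of unoriented edges incident to the sink $v_{n+1}$, then bound the probability that a given such edge is present yet lies in no immorality at $v_{n+1}$ by $\rho\,(1-\rho(1-\rho))^{n-1}$, using that the blocking events for distinct $j$ involve disjoint edge variables and are hence independent. The one point where you go beyond the paper is well taken: the paper justifies the reduction by citing only Lemma~\ref{orderUndirected}, which gives just the ``unoriented stays unoriented'' direction (yielding a lower bound on the difference), whereas the upper bound needs the converse ``oriented stays oriented'' statement; your explicit argument via unchanged immoralities among $v_1,\dots,v_n$, monotonicity of the Meek closure, and Property~\ref{prop1} supplies exactly the step the paper leaves implicit.
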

\begin{proof}

Observe that the left hand side equals the expected number of unorientable edges incident to $v_{n+1}$ in $G_{n+1}$ by Lemma \ref{orderUndirected}. We will upper bound this number as follows:

For each vertex i the edge $(i,n+1)$ is unoriented if it is present (probability $\rho$), and not part of an uncovered collider. The probability that $(i,n+1)$ and $(j,n+1)$ form an uncovered collider given $(i,n+1)$ is present is $\rho(1-\rho)$, and such probabilities for different $j$ are independent. Thus the probability that $(i,n+1)$ is not part of an uncovered collider given that it is present is $(1 - \rho(1-\rho))^{n-1}$. Multiplying by $\rho$ for the probability that $(i,n+1)$ is present, and by $n$ for the total number of such potential edges leads to the desired bound.
\end{proof}

\end{subsection}

\begin{subsection}{Proof of Theorem \ref{lazyEIbound}}
Since the unoriented edges incident to $v_{n+1}$ can be oriented with at most one intervention each, we have that $I_{n+1} - I_n \leq X_{n+1} - X_n$. This, combined with Theorem \ref{lazyEXbound} results in the desired bound.
\end{subsection}

\subsection{Proof of Theorem \ref{XisSlow}}
If $A_{i,n} = 1$, and for all $j \neq \{i,n\}$ it holds that $A_{i,j} = A{j,n} = 0$, then the edge $(i,n)$ is isolated and thus unorientable. That happens with probability $\rho (1-\rho)^{n-2} (1 - \rho)^{n-2}$ for each $i$. Note that there are $n-1$ such potential edges that are adjacent to vertex $v_n$ and therefore figure into $\E(X_{n}) - \E(X_{n-1})$, which completes the proof.

\subsection{Proof of Theorem \ref{lazyEXRbound}}

We provide a lemma and its proof regarding successive differences of the interventional metric $X_n(r)$. The result in the theorem follows immediately from this.

\begin{lemma}\label{lazyXRbound}
$\E(X_{n+1}(r)) - \E(X_n(r)) \leq \rho n*(1 - \rho (1 - \rho))^{n-1}$.
\end{lemma}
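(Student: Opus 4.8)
The plan is to establish the inequality almost surely under the coupling of $G_n$ and $G_{n+1}$ and then take expectations, closely following the proof of Lemma~\ref{lazyXbound}. The only genuinely new feature relative to the observational case is that the minimizing intervention set may differ between $G_n$ and $G_{n+1}$; I would handle this by transferring a minimizing intervention set from $G_n$ to $G_{n+1}$, which turns the observational equality into the required one-sided bound.

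First I would fix a realization of the coupling and let $I^\ast$ be a set of size $r$ attaining the minimum in the definition of $X_n(r)$. Since $I^\ast$ does not contain $v_{n+1}$, it is also a feasible size-$r$ intervention set for $G_{n+1}$, so by minimality $X_{n+1}(r)$ is at most the number of unorientable edges in $\mathrm{Ess}(G_{n+1}, I^\ast)$.

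The crux is to show that the orientation status of every edge not incident to $v_{n+1}$ is identical in $\mathrm{Ess}(G_n, I^\ast)$ and $\mathrm{Ess}(G_{n+1}, I^\ast)$. This rests on the same structural facts as Lemma~\ref{orderUndirected}: in the coupling every edge incident to $v_{n+1}$ points into $v_{n+1}$, so $v_{n+1}$ has no outgoing edge. Consequently every v-structure or intervention involving $v_{n+1}$ only orients edges incident to $v_{n+1}$, and by Property~\ref{prop1} any Meek rule in which $v_{n+1}$ participates orients an edge incident to $v_{n+1}$. Hence $v_{n+1}$ cannot alter the orientation of any edge among $v_1, \ldots, v_n$, and the two essential graphs agree off $v_{n+1}$. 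It follows that the unorientable edges of $\mathrm{Ess}(G_{n+1}, I^\ast)$ consist of exactly the $X_n(r)$ unorientable edges not incident to $v_{n+1}$, together with some number $U_{n+1}$ of unorientable edges incident to $v_{n+1}$, which yields the pointwise bound $X_{n+1}(r) \leq X_n(r) + U_{n+1}$.

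Finally I would bound $\E(U_{n+1})$ exactly as in Lemma~\ref{lazyXbound}. If an edge $(i, v_{n+1})$ is unorientable it cannot lie in any uncovered collider at $v_{n+1}$, since such a collider is a v-structure and would orient it; this implication holds regardless of $I^\ast$, because interventions can only ever orient more edges. The probability that $(i, v_{n+1})$ is present and participates in no uncovered collider is $\rho(1 - \rho(1-\rho))^{n-1}$, and summing over the $n$ candidate neighbors gives $\E(U_{n+1}) \leq \rho n (1 - \rho(1-\rho))^{n-1}$. Taking expectations in $X_{n+1}(r) \leq X_n(r) + U_{n+1}$ then yields the lemma. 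I expect the main obstacle to be the middle step, namely verifying that $v_{n+1}$ leaves all orientations among $v_1, \ldots, v_n$ untouched even once interventions are present: this is where Property~\ref{prop1} together with the absence of outgoing edges at $v_{n+1}$ does the real work, whereas the concluding count is routine and identical to the observational argument.
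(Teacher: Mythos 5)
Your proposal is correct and follows essentially the same route as the paper: the paper likewise transfers the intervention set achieving $X_n(r)$ from $G_n$ to $G_{n+1}$ (calling the resulting count $X'_{n+1}(r)$), invokes minimality to get $X_{n+1}(r) \leq X'_{n+1}(r)$, and then bounds the extra unoriented edges incident to $v_{n+1}$ by the same uncovered-collider computation as in Lemma~\ref{lazyXbound}. Your write-up is in fact more explicit than the paper's, which compresses your middle step (that $v_{n+1}$, having no outgoing edges, cannot alter orientations among $v_1,\ldots,v_n$) into the phrase ``by following the proof of Lemma~\ref{lazyXbound}.''
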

\begin{proof}
Let $X'_{n+1}(r)$ be the number of unoriented edges in $G_{n+1}$ after we apply $r$ interventions that achieve $X_n(r)$ unoriented edges in $G_n$. Observe that $X_{n+1}(r) \leq X_{n+1}'(r)$. Since we can show that $\E(X'_{n+1}(r)) - \E(X_n(r)) \leq \rho n*(1 - \rho (1 - \rho))^{n-1}$ by following the proof of Lemma \ref{lazyXbound} in the proof of Theorem \ref{lazyEXbound} , this completes the proof.
\end{proof}

\subsection{Proof of Theorem \ref{lazyEisPDAGbound}}Observe that $isuEss_n(r) - isuEss_{n+1}(r) \leq X_{n+1}(r) - X_n(r)$, because $isuEss_{n}(r) - isuEss_{n+1}(r)$ is either 0 or 1 (it cannot be -1 by Theorem \ref{EisPDAGRmonotone}), and can only be 1 if $X_{n+1}(r) > X_n(r)$. This, combined with Lemma \ref{lazyXRbound} means that $\E(isuEss_n(r)) - \E(isuEss_{n+1}(r)) \leq \rho i*(1 - \rho (1 - \rho))^{n-1}$, which provides the necessary bound.

\subsection{Proof of Theorem \ref{downstreamConvergence}}
It follows from the proof of Lemma \ref{lazyXbound} that the probability that $v_{n+1}$ has an undirected edge is less than $\mathrm{RHS}(\rho,n)$. If $v_{n+1}$ does not have an undirected edge, then, by
the fact that $A$ is a downstream independent algorithm it follows that $E(Y(r,A)_{n+1} = \E(Y(r,A)_{n})$, and therefore that $\E(Y(r,A)_{n+1} - \E(Y(r,A)_{n})$ = 0. If $v_n$ is adjacent to undirected edges, then $\E(Y(r,A)_{n+1} - \E(Y(r,A)_{n}) < n(n+1)/2$, the number of possible edges in $G_{n+1}$. The bound follows since this happens with probability $< \rho n*(1 - \rho (1 - \rho))^{n-1})$.

\subsection{Proof of Theorem \ref{thm-upp-bounds}}
We consider the following bound for $\rho=0.5$ from Theorem \ref{asymp}:
\begin{align}\label{eq-1}
E[L_{\infty}] \leq E[X_{\infty}] \leq E[X_{30}] + \epsilon_{30}.
\end{align}

$\epsilon_{30} < 0.02$ (by direct calculation) based on Lemma \ref{RHS}.

Now, we apply the following theorem from (quoted with appropriate modifications for a random variable taking values in $[0,B]$).

\begin{theorem}
\citep{maurer2009empirical} 
If $Z_1,Z_2 \ldots Z_s$ are i.i.d random variables each bounded in $[0,B]$. Let $M$ be the empirical mean and $V$ be the empirical variance of the samples. Then, with probability $1- \delta$, 
 \begin{align}
    E[Z] \leq M +   \sqrt{\frac{2V\log (2/\delta)}{s}} + B \frac{7 \log (2/\delta)}{3(s-1)} 
 \end{align}
\end{theorem}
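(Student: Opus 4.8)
The statement is the empirical Bernstein inequality of \citet{maurer2009empirical}, and I would prove it by replacing the unknown variance in the classical Bernstein inequality with the empirical variance $V$, paying for the substitution through a separate concentration bound for the sample standard deviation. The plan is to establish two high-probability events, each holding with probability at least $1-\delta/2$, and to intersect them by a union bound, so the final guarantee holds with probability $1-\delta$; this union bound is the source of the $\log(2/\delta)$ (rather than $\log(1/\delta)$) factors.

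The first event is classical Bernstein with the true variance $\sigma^2 = \mathrm{Var}(Z)$. Since $Z_i \in [0,B]$, a Chernoff bound on the moment generating function of the centered summands $Z_i - \E[Z]$ (controlling $\E[e^{\lambda(Z_i-\E Z)}]$ via the elementary estimate $e^x - 1 - x \le \tfrac{x^2/2}{1 - x/3}$) yields, after optimizing $\lambda$ and inverting the tail, a bound of the form $\E[Z] - M \le \sqrt{2\sigma^2 \log(2/\delta)/s} + \tfrac{B\log(2/\delta)}{3(s-1)}$ up to the exact residual constant. This is textbook, and I would treat it as a given subroutine.

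The crux, and the step I expect to be the main obstacle, is the second event: showing $\sigma \le \sqrt{V} + B\sqrt{2\log(2/\delta)/(s-1)}$ with high probability. The natural first attempt, McDiarmid's bounded-difference inequality applied to the map $(Z_1,\dots,Z_s)\mapsto \sqrt{V}$, is too weak: each coordinate has bounded difference only about $B/\sqrt{s}$, so $\sum_k c_k^2 \approx B^2$ and McDiarmid gives only $s$-independent (constant-width) concentration rather than the required $1/\sqrt{s-1}$ rate. The fix is to exploit that $\sqrt{V} = \tfrac{1}{\sqrt{s-1}}\|PZ\|$, where $P = I - \tfrac{1}{s}\mathbf{1}\mathbf{1}^{\top}$ is the centering projection, is a \emph{convex} function of $Z$ with Euclidean Lipschitz constant $1/\sqrt{s-1}$; Talagrand's convex-Lipschitz concentration inequality (or an equivalent self-bounding argument) then gives sub-Gaussian concentration of $\sqrt{V}$ around $\E[\sqrt{V}]$ with variance proxy $\nu^2 \asymp B^2/(s-1)$. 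A further subtlety is that $\E[\sqrt{V}] \le \sigma$ by Jensen, so the bias points the wrong way; I would close this gap using that the same sub-Gaussian estimate bounds $\mathrm{Var}(\sqrt{V}) \le \nu^2$, whence $\sigma = \sqrt{(\E[\sqrt{V}])^2 + \mathrm{Var}(\sqrt{V})} \le \E[\sqrt{V}] + \nu$, so the lower-tail bound on $\sqrt{V}$ around its mean translates into the desired upper bound on $\sigma$.

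Finally I would combine the two events. On their intersection I substitute $\sigma \le \sqrt{V} + B\sqrt{2\log(2/\delta)/(s-1)}$ into the Bernstein term $\sqrt{2\sigma^2\log(2/\delta)/s}$. This produces the main term $\sqrt{2V\log(2/\delta)/s}$, a cross term of order $2B\log(2/\delta)/\sqrt{s(s-1)}$, and the Bernstein residual of order $B\log(2/\delta)/(3s)$; bounding both $1/\sqrt{s(s-1)}$ and $1/s$ by $1/(s-1)$ collects these into $\tfrac{7}{3}\cdot\tfrac{B\log(2/\delta)}{s-1}$, matching the claimed residual. The sharp convex-concentration estimate for $\sqrt{V}$ and the delicate bookkeeping of constants in this last collection step are where I expect the real work to lie.
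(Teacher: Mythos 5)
The paper itself gives no proof of this statement: it is imported verbatim from \citep{maurer2009empirical} and used as a black box in the proof of Theorem~\ref{thm-upp-bounds}, so your sketch has to be measured against Maurer and Pontil's original argument. Your high-level architecture is correct and is indeed theirs: a Bernstein bound with the true variance $\sigma^2$ on one event of probability $1-\delta/2$, a bound of the form $\sigma \leq \sqrt{V} + B\sqrt{2\log(2/\delta)/(s-1)}$ on a second such event, a union bound producing the $\log(2/\delta)$ factors, and the final bookkeeping $2/\sqrt{s(s-1)} + 1/(3s) \leq 7/(3(s-1))$, which is exactly where the constant $7/3$ comes from.

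The genuine gap is in your primary mechanism for the second event. Talagrand's convex-Lipschitz inequality concentrates $\sqrt{V}$ around its \emph{median}, with an exponent of the form $t^2/(4\nu^2)$ and a prefactor, so it does not yield the clean $B\sqrt{2\log(2/\delta)/(s-1)}$ with the constant $2$ in the root. More fundamentally, your bias repair $\sigma \leq \E[\sqrt{V}] + \sqrt{\mathrm{Var}(\sqrt{V})} \leq \E[\sqrt{V}] + \nu$ leaves an additive $\nu = \Theta(B/\sqrt{s-1})$ that is \emph{not} multiplied by $\sqrt{\log(2/\delta)}$; pushed through the Bernstein term it contributes an extra $B\sqrt{2\log(2/\delta)}/\sqrt{s(s-1)}$, and since $\sqrt{\log(2/\delta)}$ dominates $\log(2/\delta)$ for moderate $\delta$, this term cannot be absorbed into $\tfrac{7}{3}B\log(2/\delta)/(s-1)$ uniformly in $\delta$ --- your route proves a strictly weaker inequality than the one stated. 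Maurer and Pontil avoid both problems by never working with $\sqrt{V}$ around its mean: they use the unbiasedness $\E[V] = \sigma^2$ (a fact your sketch never invokes) and apply the entropy-method lower-tail bound for \emph{self-bounding} functions to $(s-1)V/B^2$ directly, giving $\mathbb{P}\bigl(\sigma^2 - V \geq \sigma B\sqrt{2\log(1/\delta)/(s-1)}\bigr) \leq \delta$; completing the square in $\sigma$ then yields $\sigma \leq \sqrt{V} + B\sqrt{2\log(1/\delta)/(s-1)}$ with exact constants and no mean-versus-median or Jensen slack. Your parenthetical ``or an equivalent self-bounding argument'' gestures at this, but it is not an interchangeable alternative to the Talagrand plan you describe: applying self-bounding concentration to $V$ itself, rather than concentrating $\sqrt{V}$ and correcting the bias, is precisely the step that makes the stated constants attainable.
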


Now, for $X_{30}$, $B = 450$. 
Substituting $V=7.054,M=3.394,\delta=0.01$ in the above theorem and using (\ref{eq-1}) we have the bound in the theorem.
\section{Additional Figures}
The additional figures regarding numerical simulations are given in Figures \ref{Fig:4} and \ref{Fig:5}.
\begin{figure*}[!h]
    \subfloat[ \label{fig7}]{\includegraphics[width=.3\textwidth]{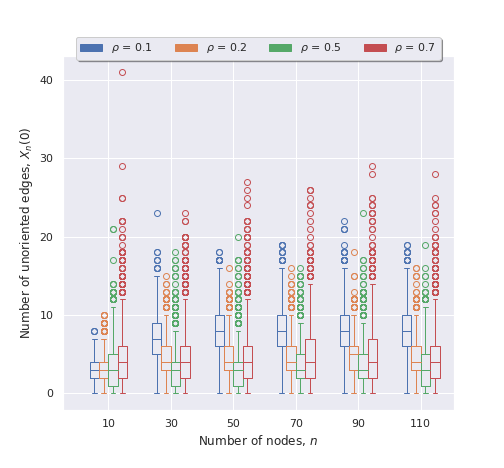}}
    \hfill
    \subfloat[  \label{fig8}]{\includegraphics[width=.3\textwidth]{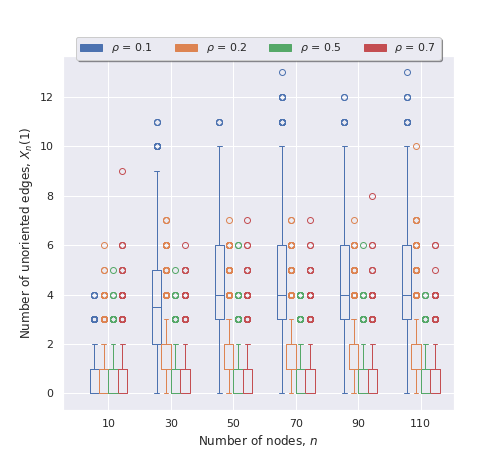}}
    \hfill
    \subfloat[  \label{fig9}]{\includegraphics[width=.3\textwidth]{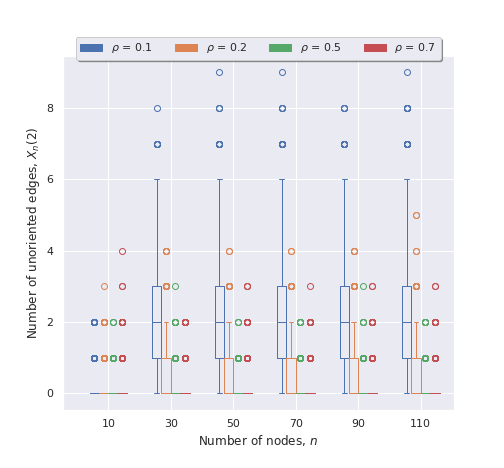}}
    \caption{Number of unoriented edges of the 2,000 orderDAG samples. The middle line of the box is the median, the upper and lower edges are the upper and lower quartiles, and the circles are outliers.}
    \label{Fig:4}
\end{figure*}

\begin{figure*}[!t]
    \subfloat[ \label{fig10}]{\includegraphics[width=.3\textwidth]{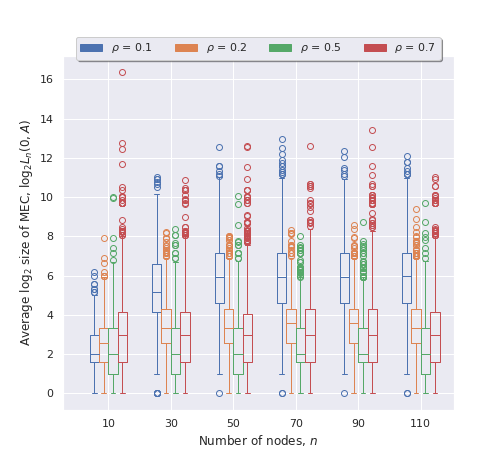}}
    \hfill
    \subfloat[  \label{fig11}]{\includegraphics[width=.3\textwidth]{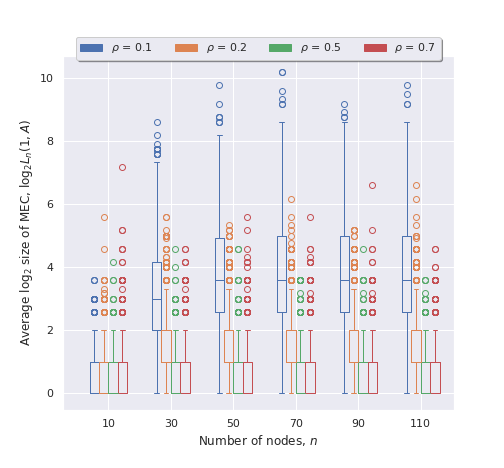}}
    \hfill
    \subfloat[  \label{fig12}]{\includegraphics[width=.3\textwidth]{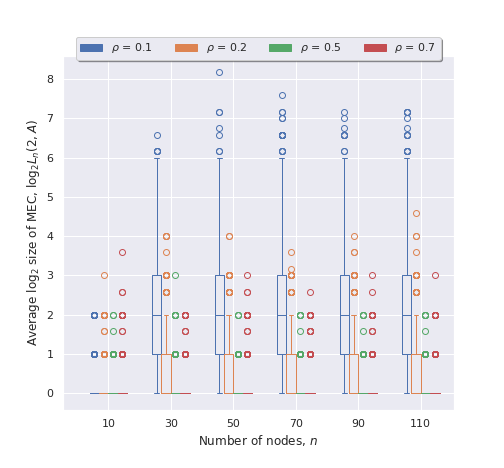}}
    \caption{$\log_2$ MEC sizes of the 2,000 orderDAG samples. The middle line of the box is the median, the upper and lower edges are the upper and lower quartiles, and the circles are outliers.}
     \label{Fig:5}
\end{figure*}

\end{document}